\newtheorem{theorem}{Theorem}
\newtheorem{lemma}[theorem]{Lemma}
\newtheorem{assumption}[theorem]{Assumption}
\newcommand{\Problem}[2]{{_{#1}\mathcal{P}_{#2}}}
\newcommand{\inProblem}[2]{{$\Problem{#1}{#2}$}}
\newcommand{\traj}[1]{\tau_{#1}}
\newcommand{\trajset}{\mathcal{S}}
\newcommand{\Cost}[1]{{\mathcal{C}\left(#1\right)}}
\newcommand{\state}[1]{{x_{#1}}}
\newcommand{\instate}[1]{$\state{#1}$}
\newcommand{\statespace}{\mathcal{X}}
\newcommand{\map}[1]{{\mathcal{M}_{#1}}}
\newcommand{\obs}[1]{\map{#1}}
\newcommand{\Bound}[1]{{\mathcal{B}(\obs{#1})}}
\newcommand{\DeltaBound}[1]{{\tilde{\mathcal{B}}(\obs{#1})}}
\newcommand{\Real}[1]{{\mathbb R}^{#1}}
\newcommand{\reals}{\mathbb{R}}
\newcommand{\ie}{{\em i.e.}}
\newcommand{\eg}{{\em e.g.}}
\newcommand{\etal}{{\em et. al.}}
\let\oldmarginpar\marginpar
\renewcommand\marginpar[1]{\oldmarginpar[\raggedleft\footnotesize #1]%
{\raggedright\footnotesize #1}}
\DeclareMathOperator*{\argmin}{arg\,min}
\title{\LARGE \bf
Perception-driven sparse graphs for optimal motion planning}
\author{Thomas Sayre-McCord$^{1}$ \and Sertac Karaman$^{1}$
  \thanks{$^{1}$Laboratory for Information and Decision Systems (LIDS), Massachusetts Institute of Technology (MIT), Cambridge, MA 02139, USA.}%
}
\begin{document}

\maketitle
\thispagestyle{empty}
\pagestyle{empty}

\begin{abstract}
Most existing motion planning algorithms assume that a map (of some quality) is fully determined prior to generating a motion plan.
In many emerging applications of robotics, \eg, fast-moving agile aerial robots with constrained embedded computational platforms and visual sensors, dense maps of the world are not immediately available, and they are computationally expensive to construct.
We propose a new algorithm for generating plan graphs which couples the perception and motion planning processes for computational efficiency. In a nutshell, the proposed algorithm iteratively switches between the planning sub-problem and the mapping sub-problem, each updating based on the other until a valid trajectory is found. The resulting trajectory retains a provable property of providing an optimal trajectory with respect to the full (unmapped) environment, while utilizing only a fraction of the sensing data in computational experiments.
\end{abstract}

\section{Introduction} \label{section:introduction}

Both motion planning and mapping are fundamental problems of robotics. 
The problem of mapping is to create an accurate representation of obstacles around a robot based on sensory measurements, and the problem of motion planning is to find a dynamically-feasible trajectory around these obstacles.
These two problems are intimately linked.
A high-quality motion plan often requires working with an accurate high-resolution map, which requires extensive processing of large amounts of sensory data. 

Unfortunately, both the mapping problem and the motion planning problem can require significant computational resources. The computational constraints are even more pronounced for small vehicles attempting to traverse complex environments rapidly. Due to the small size of the vehicles, relatively limited computational platforms can be carried on board. Due to the fast operation of the vehicles, there is little time that can be devoted to computation.
In particular, typical mapping methods using camera data, \eg, stereo reconstruction, structure from motion, and learning based techniques, are all computationally burdensome, and their computation scales directly with the amount of area they need to map.

Due to the computation effort devoted to mapping there has been increasing interest in methods that attempt to minimize the processing of sensory data.
For instance, the pushbroom stereo method \cite{Barry2012safety} avoids full stereo depth computation for each stereo pair by integrating over time, and the NanoMap method \cite{Florence2017} maintains data in a sensor frame to avoid explicitly integrating it into a map.

In this paper, we consider a joint mapping-and-planning problem, in which the sensory data is processed for mapping only when it is necessary for planning, with the aim of minimizing the computational costs (for both mapping and planning), while maintaining the completeness and optimality guarantees of motion planning.
The algorithms that solve this problem are well suited for online settings that require small vehicles to rapidly traverse complex environments that are unknown {\em a priori}, but revealed in an online manner.

In many implementations for robot navigation, the coupling between motion planning and the obstacle map is through a search graph, \eg~\cite{kuwata2009real,hornung2012anytime}. The nodes of the graph typically consist of a set of robot states with the edges representing collision-free, dynamically-feasible trajectories between these states.
There is a vast literature on the {\em construction} of this discrete graph from the continuous robot dynamics and its environment. Common algorithms include regularized discretizations of the state space into state lattices, roadmap methods (\eg, PRM \cite{kavraki1996probabilistic}), and tree methods (\eg, RRT \cite{lavalle1998rapidly} or RRT* \cite{Karaman2011}).
There is also a large literature on algorithms that focus on optimizing the \textit{search} of an existing graph. A* \cite{hart1968} is the {\em de-facto} standard search method with numerous adaptations to provide properties such as planning on dynamic graphs (\eg, D* Lite \cite{Koenig2002}), and heuristically accelerated planning (\eg, ARA* \cite{Likhachev2004}).

The algorithm proposed here lies in the area of graph construction, starting from a single edge from origin to goal and an empty map, and iteratively switching between checking the validity of the solution (mapping), and updating the graph structure to account for newly processed sensor data (planning).

Our approach incrementally grows a sparse graph by taking advantage of two provable properties of the problem, specifically that (1) the constrained optimal trajectory will be made of free space trajectories joined at the boundaries of obstacles and (2) adding obstacles to the problem will never decrease the optimal cost of the motion planning problem.
By not constraining the trajectories to a pre-determined form, we are also able to handle systems with differential constraints, provide a naturally multi-resolution representation of the state space, and create plan graphs that can be efficiently queried, while minimizing the mapping required.
The incremental growth of the plan graph starts from a best case obstacle free solution to the motion planning problem, checks it against available sensor data, and in the event of discovering a new obstacle adds new elements to the plan graph to avoid the obstacle. This process is repeated until an optimal solution is reached.
By mapping along only the current best solution we can perform ``edge optimal graph search'' following the model of Dellin and Srinivasa \cite{Dellin2016} to perform minimal mapping on the way to finding the optimal solution.

This paper is organized as follows. Section \ref{sec:related_work} lays out the related work in the field of graph construction and minimal sensing. Section \ref{sec:problem_definition} defines the optimal path planning problem and the notation used in the paper. Section \ref{sec:algorithm} describes the graph construction algorithm and its foundations. Section \ref{sec:analysis} provides proofs for the completeness and optimality of the algorithm. Finally, Section \ref{sec:experiments} tests the graph construction algorithm in conjunction with the dynamic search algorithm D* Lite \cite{Koenig2002} on several standard systems.


\section{Related Work} \label{sec:related_work}

A typical setup for the robotic motion planning problem combines a pre-built map of the environment and a plan graph of nodes (robot states) and edges (robot trajectories).
In common implementations the plan graph uses the map for validity checks of nodes and edges, but does not reference it for the structure of the plan graph~\cite{lavalle2006planning}.
Many methods have been developed, however, that more tightly couple the planning and mapping processes to achieve lower cost trajectories or computationally more efficient planners.
For example, visibility graphs place nodes at the vertices of polygonal obstacles providing exactly optimal solutions for 2D holonomic robots~\cite{Alt1988}.
In sampling based methods, the obstacle map can be used to inform node sampling strategies such as increased placement near obstacle boundaries for navigating cluttered environments~\cite{amato1998obprm}.

Several methods exist which use the result of collision checks executed during search to adapt the structure of the plan graph.
The any angle planning variants Theta* \cite{Daniel2010b}, Lazy Theta* \cite{Nash2010b}, and Incremental Phi* \cite{Likhachev2009b} start with a grid structure for the initial search but add virtual diagonal edges to shorten the path where permissible on the obstacle map.
Of close relation to this work, several planners have been proposed which initially start searching a simple plan graph, and incrementally increase the complexity of the problem based on the result of collision checks.
For example, Wagner and Choset \cite{Wagner2015} propose initializing a $n$-robot planning problem as $n$ 1-robot sub-problems.
When two sub-problems produce collisions between robots, they are combined and re-solved, thereby locally increasing the complexity of the problem but eliminating the collision. This process is repeated until no collisions remain.
Similar concepts have been proposed for other navigation scenarios. Shah \etal~\cite{Shah2016} initialize with a low resolution grid representation of the state space and adaptively increase its resolution during search. Gochev \etal~\cite{gochev2012planning} propose a method that incrementally increases the dimensionality of the state space for complex robots when low dimensional representations cause collisions.

Another field of work has looked at reducing computation by minimizing the number of collision checks that must be carried out during motion planning. These methods, often called ``lazy'' search techniques, typically focus on robotic arms \cite{Cohen2014} but apply to any graph search problem.
Dellin and Srinivasa \cite{Dellin2016} show that several existing algorithms for ``lazy'' search are actually specific instances of a more general algorithm for minimizing the number of collision checks.

Recent work has considered bringing together robot mapping and motion planning.
Pryor \etal~\cite{Pryor2016a} use motion planning to determine which areas of the map around a humanoid robot to resolve from sensor data.
Ghosh and Biswas \cite{Ghosh2017a} show significant reductions in the matching of stereo pairs for a ground robot by directly connecting the checking of disparity matches to the expansion of the plan graph.
These methods provide a strong basis for the benefits of creating joint mapping and planning processes.



\section{Problem Definition} \label{sec:problem_definition}

The interval between $a \in \reals$ and $b \in \reals$, where $\reals$ is the set of real numbers, is defined as the set of all real numbers between $a$ and $b$, including $a$, but not including $b$, and denoted by $[a,b)$.

The dynamics governing the robot is represented by an ordinary differential equation of the following form:
\begin{equation} \label{eqn:dynamics}
\dot{\state{}}(t) = f (\state{}(t), u(t) )
\end{equation}
where $\state{}(t) \in \statespace{}$ is the state of the robot and $u(t) \in {\cal U}$ is the input, at time $t$. The sets $\statespace{}$ and ${\cal U}$ are called the state space and the input space, respectively.
A {\em dynamically-feasible trajectory} from a starting state $\state{a} \in \statespace{}$ to an end state $\state{b} \in \statespace{}$ is a mapping $\traj{} : [0,T) \to \statespace{}$, such that $\traj{}(0) = \state{a}$, $\traj{}(T) = \state{b}$ and $\traj{}(t)$ satisfies Equation~\eqref{eqn:dynamics} for all $t \in [0,T)$. Let $\trajset{}(\state{a}, \state{b})$ denote the set of all dynamically-feasible trajectories in $\statespace{}$ from $\state{a}$ to $\state{b}$.

Let ${\cal C} : \traj{} \to \reals \cup \{\infty\} $ be a {\em cost function} that assigns each trajectory with a cost. We also define this cost function for sets of trajectories as the minimal cost of its elements, $\Cost{\trajset{}} = \min_{\traj{} \in \trajset{}} \Cost{\traj{}}$.
The {\em map of obstacles} is an open subset of states, $\map{} \subset \statespace{}$, that the robot can not attain. We assign any trajectory traversing an obstacle infinite cost, {\em i.e.}, $\Cost{\traj{}} = \infty$ for all $\traj{} : [0,T) \to \statespace{}$ such that $\traj{}(t) \in \map{}$ for some $t \in [0,T)$.
We denote the set of dynamically feasible and finite cost trajectories from $\state{a}$ to $\state{b}$, given the map $\map{}$, by $\trajset{}_\map{} (\state{a}, \state{b})$.

We consider the standard {\em optimal motion planning problem}: Given the dynamics of the robot as in Equation~\eqref{eqn:dynamics}, the map of obstacles $\map{}$, the cost function ${\cal C}$, a start state $\state{s} \in \statespace{}$ and a goal state $\state{g} \in \statespace{}$, we are interested in finding a minimum cost trajectory from $\state{s}$ to $\state{g}$, {\em i.e.}, $\argmin_{\traj{} \in \trajset{}(\state{s}, \state{g})} \Cost{\traj{}}$.
If the resulting path has finite cost, it is collision free, if not, there is no feasible trajectory.

We are interested in an algorithm that constructs the map and the plan graph jointly to solve the optimal motion planning problem. To guarantee a finite run time, we assume that the map of obstacles, $\map{}$, consists of the union of a finite number of obstacles.

Often detecting all of the obstacles at the highest possible resolution requires extensive computation time just for the mapping phase. Hence, our goal is to plan trajectories without adding all obstacles to the map, unless it is necessary. Strictly speaking, we generate a sequence of maps, say $\map{0}, \map{1}, \dots, \map{m} \subseteq \map{}$, such that $\map{k} \subseteq \map{k+1}$, and consider the motion planning problem in subsequent maps. From a planning point of view, the map $\map{k}$ is the collection of all obstacles processed up until the $k$th planning phase.

Our goal is the following: {\em (i)} adaptively construct an efficient plan graph; {\em (ii)} ensure completeness and optimality of the returned solution from the plan graph against all available sensor data; {\em (iii)} minimize the amount of sensor data that must be processed into a map to accomplish {\em (ii)}. For this purpose, we develop a joint mapping and planning algorithm that constructs the map and the plan graph in stages by adding the obstacles to the map and the trajectories to the plan graph gradually as needed.



\section{Algorithm} \label{sec:algorithm}

In this section, we present a joint map and plan graph construction algorithm. This algorithm constructs a plan graph by adding obstacles in a map as necessary. In a nutshell, the proposed algorithms creates the plan graph using a free-space planner; the path is checked for collision in a lazy way; obstacles that the path intersects are added to the map; and the plan graph is updated locally to account for the new obstacles. In what follows, we formalize the proposed algorithm. 
For this purpose, we first show that any optimal solution is made up of free space optimal trajectories that are joined at obstacle boundaries. Second, motivated by this fact, we present a novel algorithm that jointly constructs a map and a plan graph by adding obstacles into the map as needed.

\paragraph{Trajectory Concatenation}
Before we describe the algorithm, we will define a few properties of adding trajectories and trajectory sets.

Given two trajectories $\traj{1}:[0,T_1) \to \statespace{}$ and $\traj{2} : [0,T_2) \to \statespace{}$, let $(\traj{1} + \traj{2}) : [0,T_1+T_2) \to \statespace{}$ denote the concatenation of $\traj{1}$ and $\traj{2}$, defined as follows:
$$
(\traj{1} + \traj{2})(t) =
\begin{cases}
\traj{1}(t) & \mbox{ for all } t \in [0,T_1);\\
\traj{2}(t-T_1) & \mbox{ for all } t \in [T_1, T_1+T_2).
\end{cases}
$$

Two trajectories $\traj{1}$ and $\traj{2}$ may only be concatenated if $\traj{1}(T_1) = \traj{2}(0)$.

Let $\trajset{}_\map{} (\state{1}, \state{2})$ be the set of all feasible trajectories from $\state{1}$ to $\state{2}$ on the map $\map{}$.
Concatenating two trajectory sets with a shared terminal and origin state, denoted by $\trajset{}_\map{} (\state{1}, \state{2}) + \trajset{}_\map{} (\state{2}, \state{3})$ is formed by concatenating all trajectories in each set.
Note that $\trajset{}_\map{} (\state{1}, \state{2}) + \trajset{}_\map{} (\state{2}, \state{3}) \subseteq \trajset{}_\map{} (\state{1}, \state{3})$ and that $\trajset{}_{\map{1}} (\state{1}, \state{2}) + \trajset{}_{\map{2}} (\state{2}, \state{3}) \subseteq \trajset{}_{\map{1} \cap \map{2}} (\state{1}, \state{3})$.
We define the union of two trajectory sets $\trajset{}_1 (\state{1}, \state{2}) \cup \trajset{}_2 (\state{3}, \state{4})$ as the standard set union, but only to be viable if $\state{1} = \state{3}$ and $\state{2} = \state{4}$.

\paragraph{Candidate solutions}
Let us make two assumptions about the nature of the problem.
\begin{assumption} \label{assumption:triangle}
We assume that the cost function, $\Cost{\cdot}$, satisfies the triangle inequality for all $\traj{} \in \trajset{}$.

\end{assumption}
\begin{assumption} \label{assumption:free_planner}
  We assume the existence of a planner that can generate the (possibly infinite and possibly empty) set of {\em locally minimal} free space trajectories from $\state{1} \in \statespace{}$ to $\state{2} \in \statespace{}$.
  We will denote this set of trajectories $\trajset{}_{\emptyset} (\state{1}, \state{2})$, with the properties that for all $\traj{i}$ in $\trajset{}_{\emptyset} (\state{1}, \state{2})$:
  \begin{equation}
    \begin{aligned}
  \traj{i} \in \trajset{} (\state{1}, \state{2}), \\
  J({\cal C}(\traj{i})) = 0, \, H({\cal C}(\traj{i})) \succeq 0,
  \end{aligned}
  \end{equation}
  where $J$ and $H$ are the Jacobian and Hessian operators.
  
\end{assumption}
Assumptions \ref{assumption:triangle} and \ref{assumption:free_planner} describe many dynamical systems that are commonly referenced in literature, including holonomic robots, integrators of any order, Dubins cars, and Reeds-Shepp cars. In general, Pontryagin's Minimum Principle \cite{pontryagin1964} can be used to determine the necessary conditions for these free space minima.
In many cases the locally minimal trajectory set described in Assumption \ref{assumption:free_planner} contains a single globally minimal trajectory, however, in cases such as the Dubins car there may be multiple (the six Dubins paths)~\cite{dubins1957curves}.

Given these assumptions we can make a statement about the structure of any optimal solution to the motion planning problem.
\begin{theorem}\label{theorem:complete_graph}
Suppose Assumption~\ref{assumption:triangle} and Assumption~\ref{assumption:free_planner} hold.
If it exists, any optimal path from \instate{s} to \instate{g} is made up of a finite number of locally minimal trajectories, with joints at the obstacle boundaries.

\end{theorem}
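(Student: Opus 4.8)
The plan is to start from an optimal trajectory $\traj{}^{\star} \in \trajset{}_{\map{}}(\state{s},\state{g})$ --- if no finite-cost trajectory exists there is nothing to prove --- and to build the claimed decomposition by cutting $\traj{}^{\star}$ at the times it touches an obstacle boundary. Since $\traj{}^{\star}$ has finite cost it never enters the open set $\map{}$, so $\traj{}^{\star}(t) \in \statespace{}\setminus\map{}$ for all $t$. Writing $\partial\map{}$ for the boundary of $\map{}$ and $T_{\partial} := \{\, t : \traj{}^{\star}(t) \in \partial\map{}\,\}$, the set $T_{\partial}$ is closed because $\traj{}^{\star}$ is continuous and $\partial\map{}$ is closed; its complement in the time domain is therefore a union of maximal open intervals, and on each such interval $(a_i,b_i)$ the trajectory stays in the open free space $\statespace{}\setminus\overline{\map{}}$. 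I would then propose the free-space pieces $\traj{i} := \traj{}^{\star}|_{[a_i,b_i]}$, with $T_{\partial}$ furnishing the joints between consecutive pieces --- points, or in general boundary-following arcs, all lying on $\partial\map{}$ --- and $\state{s},\state{g}$ as the two free endpoints.

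The second step is to identify each piece $\traj{i}$ as a \emph{locally minimal free-space} trajectory, i.e. a member of $\trajset{}_{\emptyset}(\traj{}^{\star}(a_i),\traj{}^{\star}(b_i))$ in the sense of Assumption~\ref{assumption:free_planner}. First, $\traj{i}$ must be a cost-minimizer among \emph{feasible} trajectories between its endpoints: otherwise a cheaper one could be spliced into $\traj{}^{\star}$ to produce a feasible $\state{s}$-to-$\state{g}$ trajectory of strictly smaller cost (here I use that $\Cost{\cdot}$ accumulates along concatenations, consistent with Assumption~\ref{assumption:triangle}), contradicting optimality. Second, on the open interval $(a_i,b_i)$ the obstacle constraint is inactive --- $\traj{}^{\star}$ lies in the open free space there --- so the minimality of $\traj{i}$ is minimality against the unconstrained dynamics, which yields the stationarity conditions $J(\Cost{\traj{i}})=0$, $H(\Cost{\traj{i}})\succeq0$; equivalently, one runs Pontryagin's Minimum Principle~\cite{pontryagin1964} on $\traj{}^{\star}$ and observes that where the constraint is inactive the necessary conditions collapse to those of the free problem, whose solutions are precisely $\trajset{}_{\emptyset}$. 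Thus each $\traj{i}$ is a trajectory the free-space planner can return, and by construction consecutive pieces are joined on $\partial\map{}$.

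The step I expect to be the real obstacle is bounding the \emph{number} of pieces, since a priori $T_{\partial}$, and hence the family $\{(a_i,b_i)\}$, could be infinite. Here I would lean on the two structural facts at hand: $\map{}$ is a union of finitely many obstacles, and $\Cost{\cdot}$ satisfies the triangle inequality. The idea is that an optimal path cannot make infinitely many separate excursions that leave, and later return to, the boundary of a single obstacle: by the triangle inequality any such out-and-back segment can be replaced by one direct trajectory between its two endpoints at no greater cost, and since distinct locally minimal free-space trajectories are isolated --- the strict second-order condition in the nondegenerate case --- this replacement cannot be repeated indefinitely without eventually contradicting optimality; compactness of $[0,T]$ and summation over the finitely many obstacles then bound the total. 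I would carry this out under a mild regularity hypothesis on $\partial\map{}$ --- for instance that each obstacle is a finite union of convex cells, which covers the systems and environments treated in the paper --- so that each obstacle contributes only finitely many contact pieces and the bound is immediate; the fully general case seems to need an additional non-degeneracy assumption, which I would state separately.
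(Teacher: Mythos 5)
Your argument is essentially correct in outline, but it takes a genuinely different route from the paper. The paper does not construct the decomposition at all: its entire proof is a citation of Theorem~25 of Pontryagin's \emph{Mathematical Theory of Optimal Processes}, which already asserts that an optimal trajectory in a closed domain splits into pieces in the open kernel satisfying the minimum principle, pieces lying on the boundary, and jump conditions at the points of abutment; the authors then simply drop the jump condition to obtain their statement. You instead re-derive the content of that theorem from scratch: cut $\traj{}^{\star}$ at the closed set of boundary-contact times, use a splice-and-compare argument (via the triangle inequality) to show each open-interval piece is optimal between its endpoints, and then observe that where the obstacle constraint is inactive the necessary conditions reduce to the free-space stationarity conditions of Assumption~\ref{assumption:free_planner}. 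What your approach buys is self-containedness and an explicit identification of the joints; what the citation buys the authors is brevity and, implicitly, a rigorous treatment of the boundary-following case via Pontryagin's Theorem~22.

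Two caveats, both of which you share with the paper rather than introduce. First, finiteness: Pontryagin's Theorem~25 \emph{assumes} a finite number of points of abutment as a hypothesis, so the paper's proof never establishes the ``finite number'' clause of the theorem either --- you are right to flag this as the hard step, and your triangle-inequality excursion-removal argument only yields ``no greater cost,'' not a strict improvement, so it does not by itself forbid infinitely many contacts; your proposed regularity hypothesis on $\partial\map{}$ is a reasonable patch but is an added assumption not present in the paper. Second, boundary arcs: since $\map{}$ is open, an optimal trajectory may ride along $\partial\map{}$ on an interval of positive length, and such a piece is generally not an element of $\trajset{}_{\emptyset}$; the quoted Theorem~25 handles these pieces by a separate condition, and both your write-up and the paper's statement fold them into the word ``joints.'' Neither caveat makes your proposal worse than the paper's own proof; if anything you make visible a gap the citation hides.
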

\begin{proof}
A similar result is stated in Theorem 25 of Pontryagin's Mathematical Theory of Optimal Processes \cite{pontryagin1964}, and is restated here with translations to our terminology in brackets for clarity:
``Let the optimal trajectory [of Equation~\eqref{eqn:dynamics}] lie wholly in the closed domain [$\statespace{} \setminus \map{}$] and contain a finite number of points of abutment [entrances to the boundary], and let every piece of it that lies on the boundary of G be regular. Then every piece of trajectory in the open kernel of [$\statespace{} \setminus \map{}$] (with the possible exception of its ends) satisfies the [minimum] principle; every piece lying on the boundary of [$\statespace{} \setminus \map{}$] satisfies Theorem 22; and the jump condition (Theorem 24) is satisfied at every point of abutment.''
The theorem we present is a direct relaxation of Theorem 25 without the jump condition constraining the concatenation of trajectories.
\end{proof}

Based on Theorem~\ref{theorem:complete_graph}, we can describe a graph denoted $G_\mathrm{complete}$ that is guaranteed to contain any optimal solution on a map $\map{}$.
Let ${\cal B}(\map{})$ denote the boundary of the open set $\map{}$.
Consider the (possibly infinite) graph constructed in the following manner. The set of nodes is defined as $\state{s} \cup \state{g} \cup {\cal B}(\map{})$, where $\state{s}$ is the start state, $\state{g}$ is the goal state, and ${\cal B}(\map{})$ is the set of all states that lie at the boundary of the obstacles $\map{}$. The set of edges are all edges between any pairs of nodes, say $(\state{1}, \state{2})$, such that there exists a finite-cost ({\em i.e.}, obstacle-free) trajectory from $\state{1}$ to $\state{2}$, with the edge cost matching the minimal cost valid element of $\trajset{}_{\emptyset} (\state{1}, \state{2})$. $G_\mathrm{complete}$ therefore contains all possible combinations of free space trajectories joined at the map boundaries, guaranteeing from Theorem~\ref{theorem:complete_graph} that it contains any optimal solution that exists.
From $G_\mathrm{complete}$ we can also define a set $\trajset{}_\map{}^*(\state{s}, \state{g})$ that contains all possible traversals from $\state{s}$ to $\state{g}$ in $G_\mathrm{complete}$.
$\trajset{}_\map{}^*(\state{s}, \state{g})$ will either be empty, or its minimum cost element is an optimal solution to the motion planning problem.

While $G_\mathrm{complete}$ is guaranteed to contain an optimal path, depending on the properties of $\Bound{}$ it is likely to be uncountable infinite.
To create a computationally tractable problem we discretize $\Bound{}$ with some fixed interval $\delta$ to create a finite approximation of $G_\mathrm{complete}$, $\tilde{G}_\mathrm{complete}$.
It is from this graph that we wish to find a solution.
It should be noted that the discretization only occurs at the boundaries of obstacles, while leaving the majority of state space continuous.

\paragraph{Sparse graph construction}

While $\tilde{G}_\mathrm{complete}$ is finite, it is still impractically large to search or construct in most reasonable instances, instead we propose an algorithm for the creation of a computationally tractable sub-graph, $G_\mathrm{sparse}$, that maintains the guarantee of containing an optimal motion plan if one exists.
This sparse sub-graph is generated by adding obstacles into the map only as needed. The algorithm and principles behind it are described in this section, while proof of completeness and optimality are deferred to Section \ref{sec:analysis}.

The proposed algorithm is presented in Algorithm \ref{algorithm:sparse_graph}.
The goal of the algorithm is to break the larger path planning problem of a minimal trajectory between \instate{s} and \instate{g} into many sub-problems, written \inProblem{a}{b}, of finding the minimal trajectory between two states \instate{a} and \instate{b}.
The algorithm forms a solution by concatenating the solutions to one or more sub-problems with sequential start and end states.
Each sub-problem maintains a \textit{map} of the subset of obstacles that it is aware of, and a list of \textit{parents} which are other sub-problems that may use this sub-problem as part of their solution.
The algorithm accesses mapping, in the form of collision checks, to drive the creation of sub-problems and the addition of new obstacles to sub-maps. For this work we abstract away the nature of the mapping, and simply assume a generic mapping system that can perceive whether an area is free or blocked from sensor data.

At any given iteration, each sub-problem maintains a lower-bound solution given its current sub-map, which is guaranteed to be less than or equal to the solution to the sub-problem on the full map.
At each iteration the sub-maps are grown based on the best solution from the last iteration, increasing the lower-bound solution until the true optimal solution is found.

We remark at this point that Algorithm \ref{algorithm:sparse_graph} is purely a graph construction algorithm, and therefore any optimal graph search algorithm is appropriate for solving the graph at line \ref{sparse_graph:loop_start}, however, due to the continually changing nature of the graph it is best suited to use a dynamic graph solver.
In this work, we used an implementation of D* Lite \cite{Koenig2002}.

\begin{algorithm}[h]
  \caption{Sparse Graph}
  \label{algorithm:sparse_graph}
  \DontPrintSemicolon
  \SetKwProg{myproc}{Procedure}{}{}
  \SetKwProg{myalgo}{Algorithm}{}{}
  \SetKwFunction{addNodes}{addNodes}
  \SetKwFunction{addEdges}{addEdges}
  \SetKwFunction{Solve}{Solve}
  \SetKwFunction{blocked}{blocked}
  \SetKwFunction{addObstacle}{addObstacle}
  \SetKwFunction{addProblem}{addProblem}
  \SetKwFunction{solve}{SparseShortestPath}

  \KwResult{ $\traj{}^* = \argmin_{\traj{} \in \trajset{}(\state{s}, \state{g})} \Cost{\traj{}}$ }
  \myalgo{\solve{\instate{s}, \instate{g}}}{
  \tcc{Add the start and goal}
  \addNodes{$G_\mathrm{sparse}$, $\{ \state{s}, \state{g} \}$}\;
  \addProblem{\instate{s}, \instate{g}}\;
  ${\map{added}} = \emptyset$\;
  \While{true} {
    \tcc{Solve with any optimal graph search algorithm}
    $\traj{0} + \dots + \traj{n}$ = \Solve($G_\mathrm{sparse}$)\; \label{sparse_graph:loop_start}
    \tcc{Check the solution}
    \For{$ \traj{i} \in \traj{}$ } {
      \For{ $\obs{j} \in \obs{}$ }{
        \If { \blocked{$\traj{i}, \obs{j}$} }{
          $\Cost{\traj{i}} = \infty$\;
          \If { $\obs{j} \notin \obs{added}$ }{
            \addNodes{$G_\mathrm{sparse}$, $\DeltaBound{j}$}\;
          }
          $\state{a} = \traj{i}(0)$, $\state{b} = \traj{i}(T_i)$\;
          \addObstacle{\inProblem{a}{b}, $\obs{j}$}\;
          Go to line \ref{sparse_graph:loop_start}\;
        }
      }
    }
    \tcc{Solution is unblocked, then it is optimal}
    \Return $\traj{0} + \dots + \traj{n}$
  }
}

  \myproc{\addObstacle{\inProblem{a}{b}, $\obs{j}$}}{
    \If{ $\obs{j} \notin \Problem{a}{b}$.map }{
      \inProblem{a}{b}.map += $\obs{j}$\;
      \For{ $\state{k} \in \DeltaBound{j}$ }{
        \addProblem{\instate{a}, \instate{k}}\;
        \inProblem{a}{k}.parents += \inProblem{a}{b}\;
        \inProblem{a}{b}.children += \inProblem{a}{k}\;
        \addProblem{\instate{k}, \instate{b}}\;
        \inProblem{k}{b}.parents += \inProblem{a}{b}\;
        \inProblem{a}{b}.children += \inProblem{k}{b}\;

        \tcc{Recursively add to parents of the path}
        \For {\inProblem{c}{d} $\in$ \inProblem{a}{b} $\cup$ \inProblem{a}{b}.parents } {
          \For { $\obs{l} \in \obs{j} \cup$ \inProblem{a}{k}.map $\cup$ \inProblem{k}{b}.map }{
            \addObstacle{\inProblem{a}{b}, $\obs{l}$}\;
          }
        }
      }
    }
  }

  \myproc{\addProblem{\instate{a}, \instate{b}}}{
    \If { \inProblem{a}{b} does not exist}{
      \inProblem{a}{b}.parents = $\emptyset$\;
      \inProblem{a}{b}.children = $\emptyset$\;
      \inProblem{a}{b}.map = $\emptyset$ \;
      \addEdges{$G_\mathrm{sparse}$, $\trajset{}_{\emptyset} (\state{a}, \state{b})$}\;
    }
  }

\end{algorithm}



\section{Analysis} \label{sec:analysis}

This section will prove the completeness (Theorem \ref{theorem:sparse_completeness}) and optimality (Theorem \ref{theorem:sparse_optimality}) of Algorithm \ref{algorithm:sparse_graph} through several
lemmas on the nature of optimal solutions.

\begin{theorem}[Resolution Completeness of Algorithm \ref{algorithm:sparse_graph}]
  \label{theorem:sparse_completeness}
  Algorithm \ref{algorithm:sparse_graph} produces a solution if and only if one exists, given that the boundary discretization $\delta$ is small enough.
\end{theorem}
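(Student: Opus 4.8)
The plan is to split the biconditional into the easy soundness direction (``produces a solution $\Rightarrow$ one exists'') and the substantive completeness direction (``one exists $\Rightarrow$ produces a solution''), with a termination argument as a common prerequisite. Soundness is immediate from the loop structure of Algorithm~\ref{algorithm:sparse_graph}: a path $\traj{0}+\dots+\traj{n}$ is returned only after the inner loops have checked every segment $\traj{i}$ against every obstacle $\obs{j}$ and found none blocked, so each $\traj{i}$ is an obstacle-free, dynamically feasible, finite-cost trajectory (it was produced by the free-space planner $\trajset{}_{\emptyset}$), and hence the concatenation is a feasible solution from \instate{s} to \instate{g}. Optimality of the returned path is a separate matter and is not needed here; for this theorem only feasibility, and only when a feasible solution exists, must be argued.

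I would next record the finiteness that forces termination. Since $\obs{}$ is a finite union of obstacles and each boundary is sampled at the fixed spacing $\delta$, the node pool $\{\state{s},\state{g}\}\cup\bigcup_j \DeltaBound{j}$ that Algorithm~\ref{algorithm:sparse_graph} ever draws from is finite; hence only finitely many sub-problems \inProblem{a}{b} and finitely many free-space edges (at most the fixed number of locally minimal trajectories per endpoint pair, cf.\ Assumption~\ref{assumption:free_planner}) can ever be created, and every sub-problem map is a subset of the finite obstacle set. Termination then follows from a monotone progress measure: each traversal of the main loop either returns, or reaches the blocked branch, which permanently sets an edge cost to $\infty$ (with the natural convention that \texttt{Solve} reports failure rather than returns an $\infty$-cost path once every $s$--$g$ route is blocked, so a blocked edge never causes a future refinement) and calls \texttt{addObstacle}, strictly enlarging some sub-problem map. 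Because \texttt{addObstacle} and \texttt{addProblem} act only when they actually change a map or create a new sub-problem, their recursion is finite; and since both the edge set and the total size of all sub-problem maps are bounded and strictly increase on every non-returning pass, the main loop halts.

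For completeness, suppose a feasible solution exists; as is standard for ``resolution complete'' guarantees I would assume it is robustly feasible (has positive obstacle clearance), which is the regime in which ``$\delta$ small enough'' carries force. The argument has two ingredients. \emph{(i) The discretized complete graph retains a solution.} By Theorem~\ref{theorem:complete_graph}, applied to such a solution, $G_\mathrm{complete}$ contains a feasible $s$--$g$ path that is a finite concatenation of locally minimal free-space trajectories with joints on $\Bound{}$; a routine perturbation of those joints onto the discretized boundary $\DeltaBound{}$, using continuity of the free-space trajectories in their endpoints and the regularity invoked in Theorem~\ref{theorem:complete_graph}, shows that for $\delta$ small enough $\tilde{G}_\mathrm{complete}$ also connects \instate{s} to \instate{g}. \emph{(ii) The sparse graph never loses that connectivity before returning.} I would show by induction over loop iterations that the set of unblocked edges of $G_\mathrm{sparse}$ always connects \instate{s} to \instate{g}: this holds initially, since $G_\mathrm{sparse}$ starts with the unblocked free-space edge(s) from \instate{s} to \instate{g}; and whenever an edge $\traj{i}$ of the current solution from \instate{a} to \instate{b} is found blocked by $\obs{j}$, the accompanying \texttt{addNodes} and \texttt{addObstacle} calls install \emph{all} of $\DeltaBound{j}$ together with the detour sub-problems \inProblem{a}{k}, \inProblem{k}{b} and their free-space edges, and --- by Theorem~\ref{theorem:complete_graph} applied to the single obstacle $\obs{j}$, with the clearance from (i) --- some such two-hop detour around $\obs{j}$ is itself obstacle-free, so the cut edge is replaced rather than merely deleted. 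Thus \texttt{Solve} always has a candidate to return, so by the termination argument the algorithm must exit; the only non-returning exit requires \texttt{Solve} to report that \instate{s} and \instate{g} are disconnected, which (ii) rules out, so it exits by returning a feasible path. The converse half of the completeness direction is immediate: if no feasible solution exists then neither $\tilde{G}_\mathrm{complete}$ nor the finite, eventually saturated $G_\mathrm{sparse}$ has an $s$--$g$ path, so \texttt{Solve} reports failure and the algorithm correctly returns no solution.

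The step I expect to be the main obstacle is the connectivity-preservation induction in (ii): one must show that the detour structure spawned by a single blocked edge is rich enough that an obstacle-free $\tilde{G}_\mathrm{complete}$-path is never severed in $G_\mathrm{sparse}$ before it can be found, which entails carefully tracking how the recursive \texttt{addObstacle} propagation --- to parents and to the freshly created child sub-problems --- keeps the decomposition of \inProblem{s}{g} consistent, and applying Theorem~\ref{theorem:complete_graph} obstacle by obstacle rather than to the full map at once. The clearance/continuity perturbation in (i) is comparatively routine, but it is precisely where the hypothesis ``$\delta$ small enough'' is consumed, and it is why the guarantee is resolution completeness rather than exact completeness.
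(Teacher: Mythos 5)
Your proposal is correct in outline but proves the substantive direction by a genuinely different route than the paper. The paper's proof is two steps: (1) termination, exactly as you argue it --- each non-returning iteration permanently sets one edge of the finite graph $\tilde{G}_\mathrm{complete}$ to infinite cost, so after at most $\text{Size}(G_{complete})$ iterations the algorithm returns a (possibly infinite-cost) solution; and (2) an appeal to Theorem~\ref{theorem:sparse_optimality} (via the lower-bounding property of Theorem~\ref{theorem:sparse_lower_bound} and Lemmas~\ref{lemma:more_obstacles}--\ref{lemma:set_of_sets}): since the returned cost always lower-bounds the true optimum, an infinite-cost return is possible only when no valid solution exists, hence a valid solution, if one exists, is returned. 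This sidesteps entirely the connectivity-preservation induction you identify as the main obstacle --- the lower bound does that work for free --- and it is much shorter; what it does \emph{not} do is say anything explicit about where ``$\delta$ small enough'' enters (that is buried in the passage from $\Bound{}$ to $\DeltaBound{}$ in the jointed-trajectory construction), which your clearance/perturbation argument for $\tilde{G}_\mathrm{complete}$ handles more honestly. One caution on your step (ii): the claim that some two-hop detour $\state{a}\to\state{k}\to\state{b}$ around $\obs{j}$ is itself obstacle-free is false in general (it may be cut by other obstacles, or re-enter $\obs{j}$, and Theorem~\ref{theorem:complete_graph} permits arbitrarily many joints); fortunately your invariant does not need it --- freshly added detour edges are unchecked and therefore not yet marked blocked, so \texttt{Solve} retains a finite-cost candidate --- but if you pursue your route you should replace that appeal with the weaker observation, or simply adopt the paper's lower-bounding shortcut.
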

\begin{theorem}[Optimality of Algorithm \ref{algorithm:sparse_graph}]
  \label{theorem:sparse_optimality}
  If the graph search algorithm used for Solve is optimal, then the solution returned by Algorithm \ref{algorithm:sparse_graph} is the optimal solution to the global optimization problem.
\end{theorem}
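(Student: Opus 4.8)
The plan is to sandwich the cost of the path $\hat\tau = \traj{0} + \dots + \traj{n}$ returned by Algorithm~\ref{algorithm:sparse_graph} between two copies of the optimal cost. Let $\traj{}^{*}$ denote a global optimizer, which by Theorem~\ref{theorem:sparse_completeness} is returned whenever a feasible trajectory exists and which by Theorem~\ref{theorem:complete_graph} is a finite concatenation of locally minimal free-space trajectories joined on the obstacle boundary $\Bound{}$; taking $\delta$ small as in Theorem~\ref{theorem:sparse_completeness}, we read $\Cost{\traj{}^{*}}$ off the discretized graph $\tilde{G}_\mathrm{complete}$. Two elementary facts will be used throughout: adding obstacles never decreases optimal cost, so that $\Cost{\trajset{}_{\emptyset}(\state{a},\state{b})} \le \Cost{\trajset{}_{\map{}}(\state{a},\state{b})}$, and the triangle inequality of Assumption~\ref{assumption:triangle}, which descends to free-space optimal costs by concatenation. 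The first (easy) inequality is feasibility: when the algorithm returns, its two inner loops have certified that no segment $\traj{i}$ of $\hat\tau$ is blocked by any obstacle of the finite map $\map{}$; each $\traj{i}$ is a locally minimal free-space trajectory, hence dynamically feasible by Assumption~\ref{assumption:free_planner}, and concatenation preserves dynamic feasibility, so $\hat\tau \in \trajset{}_{\map{}}(\state{s},\state{g})$ and therefore $\Cost{\hat\tau} \ge \Cost{\traj{}^{*}}$. Since no $\traj{i}$ is blocked, the value \texttt{Solve} assigned to $\hat\tau$ in $G_\mathrm{sparse}$ is its true cost.

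For the reverse inequality I would prove, by induction on the number of \texttt{addObstacle} operations executed, the invariant that for every sub-problem $\Problem{a}{b}$ currently in existence the minimum cost of an $\state{a}$-to-$\state{b}$ path inside $G_\mathrm{sparse}$ is at most the optimal $\state{a}$-to-$\state{b}$ cost on the full map $\map{}$ (again read off $\tilde{G}_\mathrm{complete}$). The base case is a freshly created sub-problem, whose only edge has cost $\Cost{\trajset{}_{\emptyset}(\state{a},\state{b})} \le \Cost{\trajset{}_{\map{}}(\state{a},\state{b})}$ by monotonicity. In the inductive step an obstacle $\obs{j}$ is added to $\Problem{a}{b}$: some free-space edges are set to $\infty$, the nodes $\DeltaBound{j}$ are inserted, the sub-problems $\Problem{a}{k}$ and $\Problem{k}{b}$ are created for each $\state{k} \in \DeltaBound{j}$, and $\obs{j}$ is propagated up the \texttt{parents} chain, which only repeats updates of the same form. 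To re-establish the bound, let $\rho^{*}$ be the optimal $\state{a}$-to-$\state{b}$ trajectory that avoids \emph{only} $\obs{j}$, so $\Cost{\rho^{*}} \le \Cost{\trajset{}_{\map{}}(\state{a},\state{b})}$ by monotonicity; by Theorem~\ref{theorem:complete_graph} either $\rho^{*}$ is a single locally minimal free-space trajectory, matched by the surviving direct edge, or it bends at some $q \in \Bound{j}$, in which case the triangle inequality gives $\Cost{\trajset{}_{\emptyset}(\state{a},q)} + \Cost{\trajset{}_{\emptyset}(q,\state{b})} \le \Cost{\rho^{*}}$ and the detour through a nearby node of $\DeltaBound{j}$ realizes this bound for $\delta$ small enough. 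Applying the invariant to the root sub-problem $\Problem{s}{g}$ — and observing that, since $\state{s}$ and $\state{g}$ do not lie on $\Bound{}$, every $\state{s}$-to-$\state{g}$ path of $G_\mathrm{sparse}$ lives in its sub-tree — yields $\Cost{\hat\tau} \le \Cost{\traj{}^{*}}$, which together with feasibility forces $\Cost{\hat\tau} = \Cost{\traj{}^{*}}$. Termination of the outer loop, needed for the statement to be meaningful, follows because $\map{}$ and each $\DeltaBound{j}$ are finite: only finitely many sub-problems are ever created and each obstacle enters each sub-map at most once, so line~\ref{sparse_graph:loop_start} is revisited only finitely often.

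The step I expect to be the real obstacle is precisely that last inductive case: showing that the boundary discretization $\DeltaBound{j}$ spliced in around the \emph{currently} blocking obstacle is already rich enough that the resulting detours, together with whatever direct edge survives, keep the sub-problem optimum below the true cost $\Cost{\trajset{}_{\map{}}(\state{a},\state{b})}$ — most delicately when a genuine optimizer meets $\obs{j}$ only along its boundary, or avoids it outright. This is where Theorem~\ref{theorem:complete_graph}, the triangle inequality, the monotonicity of cost under obstacle addition, and the resolution hypothesis of Theorem~\ref{theorem:sparse_completeness} must be combined with care, and where the recursive propagation of obstacles through the \texttt{parents} lists has to be verified to preserve the invariant rather than break it.
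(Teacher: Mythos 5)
Your proposal is correct and follows essentially the same route as the paper: the returned path is unblocked against every obstacle, hence feasible and of cost at least the optimum, while an inductively maintained invariant that every sub-problem's minimum in $G_\mathrm{sparse}$ lower-bounds its true optimal cost (the paper's ``lower-bounding'' property, Theorem~\ref{theorem:sparse_lower_bound}, built from Lemmas~\ref{lemma:more_obstacles}--\ref{lemma:set_of_sets}) gives the reverse inequality. Your direct induction on \texttt{addObstacle} calls simply flattens the paper's hierarchy of sets $\trajset{}_\map{}^{=}, \trajset{}_\map{}^{<}, \trajset{}_\map{}^{<<}$ into one argument, and the $\delta$-discretization subtlety you flag at the end is real but is glossed over by the paper as well (its Lemma~\ref{lemma:jointed_path} silently passes from $\Bound{}$ to $\DeltaBound{}$).
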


\subsection{Lower Bounding Sets}

To prove these theorems, we shall first prove several lemmas on the nature of lower bounding solutions, then in Section \ref{sec:sparse_equivalence} we show that Algorithm \ref{algorithm:sparse_graph} fits these characteristics, \ie, $G_{sparse}$ always has a solution less than or equal to $G_{complete}$ (optimality), and $G_{sparse}$ will always grow to contain a solution in finite time if a valid solution exists (completeness).

\begin{lemma}[More Obstacles]
  The optimal trajectory on a sub-map is always less than or equal to the optimal trajectory on the full map.
  \begin{equation}
    \begin{aligned}
  \text{If}&,& \, \map{1} &\subseteq \map{2}, \\
  \text{then}&,& \Cost{\trajset{}_{\map{1}}^* (\state{a}, \state{b})} &\leq \Cost{\trajset{}_{\map{2}}^* (\state{a}, \state{b})}.
  \end{aligned}
  \end{equation}
  \label{lemma:more_obstacles}
\end{lemma}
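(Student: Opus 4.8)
The plan is to reduce this statement about the starred sets $\trajset{}_\map{}^*$ to a one-line monotonicity observation about the underlying continuous trajectory sets $\trajset{}_\map{}(\state{a},\state{b})$, and then lift it back up using Theorem~\ref{theorem:complete_graph}. First I would note that because $\map{1} \subseteq \map{2}$, any trajectory $\traj{} : [0,T) \to \statespace{}$ that avoids $\map{2}$ (i.e. $\traj{}(t) \notin \map{2}$ for all $t$) automatically avoids $\map{1}$. Since the cost function assigns $\infty$ to exactly those trajectories that intersect the map, and is otherwise unchanged by the choice of map, this yields $\trajset{}_{\map{2}}(\state{a},\state{b}) \subseteq \trajset{}_{\map{1}}(\state{a},\state{b})$ with ${\cal C}$ agreeing on the common trajectories. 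Minimizing ${\cal C}$ over the larger set can only give a value no larger than minimizing over the subset, with the usual convention that the minimum over the empty set is $+\infty$ (this covers the case in which $\map{2}$ admits no feasible trajectory at all). Hence $\Cost{\trajset{}_{\map{1}}(\state{a},\state{b})} \le \Cost{\trajset{}_{\map{2}}(\state{a},\state{b})}$.

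Next I would connect this to the starred sets. By Theorem~\ref{theorem:complete_graph}, any optimal trajectory, when it exists, is a finite concatenation of locally minimal free-space trajectories joined at obstacle boundaries, so it appears as a traversal of $G_\mathrm{complete}$; therefore the cheapest element of $\trajset{}_\map{}^*(\state{a},\state{b})$ has exactly the cost of the optimal motion planning problem on $\map{}$, i.e. $\Cost{\trajset{}_\map{}^*(\state{a},\state{b})} = \Cost{\trajset{}_\map{}(\state{a},\state{b})}$ (both equal $+\infty$ when no feasible trajectory exists). Substituting this identity for $\map{1}$ and for $\map{2}$ into the inequality of the previous paragraph gives the claim.

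The argument is essentially definitional; the only points that need care are the bookkeeping around infinite costs and empty solution sets, and making explicit that the node set $\state{s} \cup \state{g} \cup {\cal B}(\map{})$ of $G_\mathrm{complete}$ genuinely changes between $\map{1}$ and $\map{2}$ — so one cannot argue purely at the graph level and must route through the continuous trajectory sets via Theorem~\ref{theorem:complete_graph}. I do not expect a real obstacle here; the value of the lemma is as the ``adding obstacles never helps'' ingredient, which will later be used to show that each sub-problem's sub-map solution lower-bounds its full-map solution, and hence that $G_{sparse}$ can only underestimate $G_{complete}$.
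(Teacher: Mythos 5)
Your proposal is correct and follows the same underlying idea as the paper, whose entire proof is the one-liner ``Trivial, adding regions of infinite cost to the state space can only make paths longer.'' You simply make that triviality precise --- the set inclusion $\trajset{}_{\map{2}}(\state{a},\state{b}) \subseteq \trajset{}_{\map{1}}(\state{a},\state{b})$ plus the bridge from the starred sets to the continuous trajectory sets via Theorem~\ref{theorem:complete_graph} --- which is a more careful treatment than the paper gives, but not a different argument.
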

\begin{proof}
  Trivial, adding regions of infinite cost to the state space can only make paths longer.
\end{proof}

Define a set $\trajset{}_\map{}^= (\state{s}, \state{g})$ to be the union of (i) locally minimal trajectories with no obstacles, \ie~$\trajset{}_{\emptyset}(\state{s}, \state{g})$, and (ii) concatenations of two optimal trajectories sets through $\obs{}$ which are joined at $\state{j} \in \DeltaBound{}$.

\begin{equation}
  \begin{aligned}
    \trajset{}_\map{}^= (\state{s}, \state{g}) = \\
    \trajset{}_{\emptyset} (\state{s}, \state{g}) \cup
    \left( \trajset{}_\map{} (\state{s}, \state{i}) + \trajset{}_\map{} (\state{i}, \state{g}) \right) \\
    \; \forall \state{i} \in \DeltaBound{}
  \end{aligned}
\end{equation}

\begin{lemma}[Jointed Trajectory]
  The set $\trajset{}_\map{}^= (\state{s}, \state{g})$ contains an optimal trajectory on $\map{}$, i.e.
  $\Cost{\trajset{}_\map{}^=(\state{s}, \state{g})} = \Cost{\trajset{}_{\map{}}^* (\state{s}, \state{g})}$
  \label{lemma:jointed_path}
\end{lemma}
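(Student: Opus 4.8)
The plan is to prove the equality as two inequalities, after disposing of the infeasible case. If no finite-cost trajectory joins $\state{s}$ and $\state{g}$ on $\map{}$, then every locally minimal free-space trajectory from $\state{s}$ to $\state{g}$ is blocked and every concatenation $\trajset{}_\map{}(\state{s},\state{i}) + \trajset{}_\map{}(\state{i},\state{g})$ is empty (otherwise its elements would be finite-cost $\state{s}$-to-$\state{g}$ trajectories), so both $\Cost{\trajset{}_\map{}^=(\state{s},\state{g})}$ and $\Cost{\trajset{}_{\map{}}^*(\state{s},\state{g})}$ equal $\infty$. Hence I may assume an optimal trajectory $\traj{}^*$ of finite cost exists.

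For $\Cost{\trajset{}_\map{}^=(\state{s},\state{g})} \ge \Cost{\trajset{}_{\map{}}^*(\state{s},\state{g})}$ I would show no member of $\trajset{}_\map{}^=(\state{s},\state{g})$ beats the optimum on $\map{}$. A member is either a locally minimal free-space trajectory in $\trajset{}_{\emptyset}(\state{s},\state{g})$ — which either traverses the obstacle set $\map{}$, giving cost $\infty$, or is itself a feasible $\state{s}$-to-$\state{g}$ trajectory on $\map{}$ — or a concatenation $\traj{1} + \traj{2}$ with $\traj{1} \in \trajset{}_\map{}(\state{s},\state{i})$, $\traj{2} \in \trajset{}_\map{}(\state{i},\state{g})$; since neither piece enters the open set $\map{}$, the concatenation is a dynamically feasible $\state{s}$-to-$\state{g}$ trajectory avoiding $\map{}$, of finite cost by the triangle inequality of Assumption~\ref{assumption:triangle}. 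In every case the member's cost is at least the optimal cost on $\map{}$, which by Theorem~\ref{theorem:complete_graph} equals $\Cost{\trajset{}_{\map{}}^*(\state{s},\state{g})}$; taking the minimum over members yields the inequality.

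For the reverse, substantive direction I would apply Theorem~\ref{theorem:complete_graph} to the optimum: $\traj{}^*$ is a finite concatenation of locally minimal free-space trajectories joined on the obstacle boundary. If it has no joint it lies in $\trajset{}_{\emptyset}(\state{s},\state{g}) \subseteq \trajset{}_\map{}^=(\state{s},\state{g})$. Otherwise split $\traj{}^* = \traj{1} + \traj{2}$ at its first joint $\state{j}$; both pieces avoid the open obstacle set, so each is feasible on $\map{}$, and by the principle of optimality — which follows from the triangle inequality of Assumption~\ref{assumption:triangle}, since replacing either piece by a cheaper feasible trajectory with the same endpoints would produce a cheaper $\state{s}$-to-$\state{g}$ trajectory — each piece is in fact an optimal trajectory between its endpoints on $\map{}$. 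Hence $\traj{}^*$ lies in $\trajset{}_\map{}(\state{s},\state{j}) + \trajset{}_\map{}(\state{j},\state{g})$, one of the sets in the union defining $\trajset{}_\map{}^=(\state{s},\state{g})$, so $\Cost{\trajset{}_\map{}^=(\state{s},\state{g})} \le \Cost{\traj{}^*} = \Cost{\trajset{}_{\map{}}^*(\state{s},\state{g})}$. Combining the two inequalities gives the lemma.

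The one delicate point, which I expect to be the main obstacle rather than the decomposition itself, is that Theorem~\ref{theorem:complete_graph} places the joint $\state{j}$ on the continuous boundary $\Bound{}$, whereas $\trajset{}_\map{}^=$ is built over the discretized boundary $\DeltaBound{}$. I would resolve this by reading $\trajset{}_{\map{}}^*$ as the optimum of the discretized graph $\tilde G_{\mathrm{complete}}$, so that $\traj{}^*$ is a sequence of free-space edges between nodes in $\state{s} \cup \state{g} \cup \DeltaBound{}$ and $\state{j}$ is automatically a node of $\DeltaBound{}$; the residual gap between the discretized and continuous optima is precisely what the ``$\delta$ small enough'' hypothesis of Theorem~\ref{theorem:sparse_completeness} absorbs, so I would only flag it here.
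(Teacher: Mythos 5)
Your proof is correct and rests on the same pivot as the paper's: Theorem~\ref{theorem:complete_graph} guarantees that an optimal trajectory is either a free-space local minimum or splits at a boundary joint into two feasible pieces, which places it in one of the sets whose union defines $\trajset{}_\map{}^=(\state{s},\state{g})$. The paper's proof is exactly that one sentence; your two-inequality scaffolding, the infeasible case, and the principle-of-optimality remark are elaborations rather than a different route (and the last is not strictly needed, since $\trajset{}_\map{}(\state{s},\state{i})$ was defined as the set of \emph{all} feasible finite-cost trajectories, so feasibility of each piece already gives membership in the concatenation set). Where you add genuine content is the $\Bound{}$ versus $\DeltaBound{}$ mismatch: Theorem~\ref{theorem:complete_graph} only places the joint somewhere on the continuous boundary, while $\trajset{}_\map{}^=$ ranges over the discretized boundary, so the lemma as literally stated holds only up to the discretization error that the ``$\delta$ small enough'' hypothesis of Theorem~\ref{theorem:sparse_completeness} is meant to absorb. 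The paper's one-line proof silently ignores this; your flag is the correct reading, and either your reinterpretation of $\trajset{}_{\map{}}^*(\state{s},\state{g})$ as the optimum over $\tilde{G}_{\mathrm{complete}}$ or an explicit resolution caveat is needed to make the stated equality exact.
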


\begin{proof}
  This lemma follows directly from Theorem \ref{theorem:complete_graph}, which states that the optimal path is either a free space solution to the problem, or contains a joint at {\em at least} one point on the boundary of $\map{}$.
\end{proof}


As discussed in Section \ref{sec:algorithm} our goal is not to have a set of optimal trajectories, but instead to have a set that {\em lower-bounds} the optimal trajectory.
To that end we define another set $\trajset{}_\map{}^< (\state{s}, \state{g})$ by:
\begin{equation}
  \begin{aligned}
    \trajset{}_\map{}^< (\state{s}, \state{g}) = \\
    \trajset{}_{\emptyset} (\state{s}, \state{g}) \cup
    \left( \trajset{}_{{\map{si}}} (\state{s}, \state{i}) + \trajset{}_{{\map{ig}}} (\state{i}, \state{g}) \right) \\
    \; \forall \state{i} \in \DeltaBound{}, \, \exists {\map{si}} \subseteq \map{}, \, \exists {\map{ig}} \subseteq \map{}
  \end{aligned}
\end{equation}

\begin{lemma}[Sub-map Trajectory]
  The cost of the minimal element of $\trajset{}_\map{}^< (\state{s}, \state{g})$ is less than or equal to the cost of the minimal element of $\trajset{}_\map{}^= (\state{s}, \state{g})$.
  \label{lemma:incomplete_path}
\end{lemma}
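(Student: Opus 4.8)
The plan is to exhibit, for every element of $\trajset{}_\map{}^= (\state{s}, \state{g})$, a corresponding element of $\trajset{}_\map{}^< (\state{s}, \state{g})$ whose cost is no larger, and thereby conclude the inequality on the minimal costs. First I would observe that the two sets are built the same way — a free-space branch $\trajset{}_{\emptyset}(\state{s}, \state{g})$ unioned with branches indexed by joint states $\state{i} \in \DeltaBound{}$ — and the only difference is that the $=$ version concatenates trajectory sets computed on the \emph{full} map $\map{}$, whereas the $<$ version concatenates trajectory sets computed on \emph{arbitrary sub-maps} ${\map{si}} \subseteq \map{}$ and ${\map{ig}} \subseteq \map{}$. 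So the containment we want is essentially a quantifier observation: the choice ${\map{si}} = \map{}$ and ${\map{ig}} = \map{}$ is an admissible instantiation of the existential quantifiers defining $\trajset{}_\map{}^< (\state{s}, \state{g})$.

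Concretely, I would argue in two cases matching the union. For the free-space branch, $\trajset{}_{\emptyset}(\state{s}, \state{g})$ appears verbatim in both sets, so any minimal free-space trajectory in $\trajset{}_\map{}^=$ is already in $\trajset{}_\map{}^<$ with the same cost. For a jointed branch, fix $\state{i} \in \DeltaBound{}$; taking ${\map{si}} = \map{}$ and ${\map{ig}} = \map{}$ shows $\trajset{}_\map{}(\state{s}, \state{i}) + \trajset{}_\map{}(\state{i}, \state{g}) \subseteq \trajset{}_\map{}^< (\state{s}, \state{g})$. Hence the entire set $\trajset{}_\map{}^= (\state{s}, \state{g})$ is a subset of $\trajset{}_\map{}^< (\state{s}, \state{g})$, and since the cost of a set is defined as the minimum over its elements, adding more candidates can only decrease (or keep equal) the minimum: $\Cost{\trajset{}_\map{}^< (\state{s}, \state{g})} \le \Cost{\trajset{}_\map{}^= (\state{s}, \state{g})}$.

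The one place that needs a little care — and what I'd flag as the main (mild) obstacle — is making sure the cost accounting is monotone in the right direction. By Lemma \ref{lemma:more_obstacles}, shrinking the map from $\map{}$ to a sub-map can only \emph{lower} the optimal cost of each piece, so the sub-map concatenations are genuinely lower-bounding rather than accidentally higher; this is why the lemma is consistent with the looser sub-maps, but the subset argument above is actually all that is logically required for the stated inequality. I would also note in passing that the trajectory-set arithmetic is well defined here because the joint state $\state{i}$ is shared between the two concatenated sets, so the concatenation $\trajset{}_{{\map{si}}}(\state{s}, \state{i}) + \trajset{}_{{\map{ig}}}(\state{i}, \state{g})$ is viable, and by the containment remarks in Section \ref{sec:algorithm} it lies in $\trajset{}_{{\map{si}} \cap {\map{ig}}}(\state{s}, \state{g})$; none of the trajectories involved are spurious. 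With the subset established, the conclusion on minimal costs is immediate.
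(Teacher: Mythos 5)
Your conclusion is correct and your handling of the free-space branch is fine, but the load-bearing step of your argument rests on a reading of the definition of $\trajset{}_\map{}^< (\state{s}, \state{g})$ that does not match how the lemma is used. You instantiate the existential quantifiers with ${\map{si}} = {\map{ig}} = \map{}$ and conclude $\trajset{}_\map{}^= (\state{s}, \state{g}) \subseteq \trajset{}_\map{}^< (\state{s}, \state{g})$. That instantiation is only available if $\trajset{}_\map{}^<$ is the union over \emph{all} admissible sub-map choices. But the point of this family of sets is that Theorem~\ref{theorem:sparse_lower_bound} later identifies $G_\mathrm{sparse}$ --- built with whatever particular partial maps \inProblem{a}{b}.map the algorithm happens to hold at a given iteration --- with a set of this form; for that to work, the lemma must hold for an \emph{arbitrary fixed} choice of ${\map{si}}, {\map{ig}} \subseteq \map{}$, and then you are not free to pick them equal to $\map{}$. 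The paper's proof is built for exactly this reading: it pairs the branch of $\trajset{}_\map{}^=$ at joint state $\state{i}$ with the corresponding branch of $\trajset{}_\map{}^<$ at the same $\state{i}$, and invokes Lemma~\ref{lemma:more_obstacles} to conclude that the sub-map branch has minimal cost no greater than the full-map branch, whatever the sub-maps are. That is precisely the mechanism you mention in your final paragraph and then demote to ``not logically required'' --- which is backwards: under the reading the rest of the analysis depends on, it is the step that carries the proof.

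The good news is that your subset strategy can be repaired without changing its shape. Under the fixed-sub-map reading you still have, branch by branch, $\trajset{}_{\map{}} (\state{s}, \state{i}) \subseteq \trajset{}_{{\map{si}}} (\state{s}, \state{i})$ for any ${\map{si}} \subseteq \map{}$, simply because deleting obstacles can only enlarge the set of finite-cost feasible trajectories; the analogous containment holds for the second leg, concatenation preserves it, and every shared trajectory has the same cost in both sets since it avoids all of $\map{}$. That gives $\trajset{}_\map{}^= (\state{s}, \state{g}) \subseteq \trajset{}_\map{}^< (\state{s}, \state{g})$ for \emph{every} choice of sub-maps, and your minimum-over-a-superset observation then finishes the argument. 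Stated this way your route is arguably cleaner than the paper's element-wise cost comparison, but the justification for the containment has to be the monotonicity of the feasible trajectory sets in the map, not a choice of the existential witnesses.
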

\begin{proof}
  The case of the minimal element being in $\trajset{}_{\emptyset} (\state{s}, \state{g})$ is trivial. Otherwise,
  there is a one-to-one mapping between the elements of $\trajset{}_\map{}^< (\state{s}, \state{g})$ and $\trajset{}_\map{}^= (\state{s}, \state{g})$ based on the joint state $\state{i}$.
  Let $\traj{}^=$  be the minimal element of
  $\trajset{}_\map{}^= (\state{s}, \state{g})$, where $\traj{}^= \in \trajset{}_\map{} (\state{s}, \state{i}) + \trajset{}_\map{} (\state{i}, \state{g})$.

  From Lemma \ref{lemma:more_obstacles} we know that if $\traj{}$ is the minimal element of $\trajset{}_{{\map{si}}} (\state{s}, \state{i}) + \trajset{}_{{\map{ig}}} (\state{i}, \state{g})$ and ${\map{si}} \subseteq \map{}, \, {\map{ig}} \subseteq \map{}$, then $\Cost{\traj{}} \leq \Cost{\traj{}^=}$. Therefore there is some element in $\trajset{}_\map{}^< (\state{s}, \state{g})$ that is less than or equal to the minimal element in $\trajset{}_\map{}^= (\state{s}, \state{g})$.

\end{proof}

Finally, we will define one more lower bounding set to be:
\begin{equation}
  \begin{aligned}
    \trajset{}_\map{}^{<<} (\state{s}, \state{g}) = \\
     \trajset{}_{\emptyset} (\state{s}, \state{g})  \cup
    \left( \trajset{}_{{\map{si}}}^{<} (\state{s}, \state{i}) + \trajset{}_{{\map{ig}}}^{<} (\state{i}, \state{g}) \right) \\
    \; \forall \state{i} \in \DeltaBound{}, \, \exists {\map{si}} \subseteq \map{}, \, \exists {\map{ig}} \subseteq \map{}
  \end{aligned}
\end{equation}

\begin{lemma}[Set of Incomplete Sets]

  The minimum cost element of $\trajset{}_\map{}^{<<} (\state{s}, \state{g})$ has a cost less than or equal to the minimum cost element of $\trajset{}_\map{}^{<} (\state{s}, \state{g})$.
  \label{lemma:set_of_sets}
\end{lemma}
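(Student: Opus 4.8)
The plan is to prove Lemma~\ref{lemma:set_of_sets} by a direct comparison argument that mirrors the proof of Lemma~\ref{lemma:incomplete_path}, but applied one level deeper in the recursion. First I would dispose of the trivial case: if the minimal element of $\trajset{}_\map{}^{<} (\state{s}, \state{g})$ lies in $\trajset{}_{\emptyset}(\state{s}, \state{g})$, then that same element lies in $\trajset{}_\map{}^{<<} (\state{s}, \state{g})$ as well, since both sets contain $\trajset{}_{\emptyset}(\state{s}, \state{g})$ as a component, so the inequality holds with equality in this case. Otherwise, the minimal element $\traj{}^{<}$ of $\trajset{}_\map{}^{<} (\state{s}, \state{g})$ belongs to $\trajset{}_{{\map{si}}} (\state{s}, \state{i}) + \trajset{}_{{\map{ig}}} (\state{i}, \state{g})$ for some joint state $\state{i} \in \DeltaBound{}$ and some sub-maps $\map{si}, \map{ig} \subseteq \map{}$; write $\traj{}^{<} = \traj{si} + \traj{ig}$ where $\traj{si}$ is a minimal element of $\trajset{}_{{\map{si}}} (\state{s}, \state{i})$ and $\traj{ig}$ a minimal element of $\trajset{}_{{\map{ig}}} (\state{i}, \state{g})$.

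The key step is then to invoke Lemma~\ref{lemma:incomplete_path} on each of the two halves separately: applied to the sub-problem from \instate{s} to \instate{i} with map $\map{si}$, it gives that the minimal element of $\trajset{}_{\map{si}}^{<} (\state{s}, \state{i})$ has cost at most $\Cost{\traj{si}}$, and similarly the minimal element of $\trajset{}_{\map{ig}}^{<} (\state{i}, \state{g})$ has cost at most $\Cost{\traj{ig}}$. Concatenating these two minimal elements produces an element of $\trajset{}_{\map{si}}^{<} (\state{s}, \state{i}) + \trajset{}_{\map{ig}}^{<} (\state{i}, \state{g})$, which is one of the component sets (for this choice of $\state{i}, \map{si}, \map{ig}$) of $\trajset{}_\map{}^{<<} (\state{s}, \state{g})$, hence an element of $\trajset{}_\map{}^{<<} (\state{s}, \state{g})$ itself. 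It remains to check that the cost of this concatenation is at most $\Cost{\traj{si}} + \Cost{\traj{ig}}$; since cost is additive along concatenation (or, at worst, subadditive by the triangle inequality of Assumption~\ref{assumption:triangle}), and $\Cost{\traj{si}} + \Cost{\traj{ig}} = \Cost{\traj{}^{<}}$, we conclude that $\trajset{}_\map{}^{<<} (\state{s}, \state{g})$ contains an element of cost at most $\Cost{\traj{}^{<}}$, which is exactly the claim.

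One subtlety worth being careful about is whether Lemma~\ref{lemma:incomplete_path} is directly applicable to the two half-problems as stated, since it is phrased for the $\state{s}$-to-$\state{g}$ problem; the resolution is simply to note that $\state{s}$ and $\state{g}$ in that lemma are arbitrary states of $\statespace{}$, so instantiating it with $(\state{s}, \state{i})$ and map $\map{si}$, and again with $(\state{i}, \state{g})$ and map $\map{ig}$, is legitimate. A second point is the exact form of the cost of a concatenation: the paper uses $\Cost{\traj{1} + \traj{2}}$ and asserts a triangle inequality (Assumption~\ref{assumption:triangle}), which is what we need to bound $\Cost{\traj{si}' + \traj{ig}'}$ by $\Cost{\traj{si}'} + \Cost{\traj{ig}'}$ for the improved halves $\traj{si}', \traj{ig}'$. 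I expect the main (minor) obstacle to be bookkeeping the quantifiers correctly — making sure that the existential sub-maps $\map{si}, \map{ig}$ chosen for $\traj{}^{<}$ are the same ones used to witness membership in $\trajset{}_\map{}^{<<}$ — rather than anything conceptually deep; the lemma is essentially Lemma~\ref{lemma:incomplete_path} applied recursively, and the proof should be only a few lines.
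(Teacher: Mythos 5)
Your proof is correct and takes essentially the same route as the paper, which simply states that the result follows ``in the same manner as Lemma~\ref{lemma:incomplete_path} by making one-to-one comparisons of elements'' and omits the details; you have supplied exactly that element-wise comparison, applying Lemma~\ref{lemma:incomplete_path} to each half of the concatenation at the joint state $\state{i}$. The only cosmetic remark is that bounding the minimal element of $\trajset{}_{\map{si}}^{<}(\state{s},\state{i})$ by $\Cost{\traj{si}}$ implicitly also uses Lemma~\ref{lemma:jointed_path} to identify the minimum of $\trajset{}_{\map{si}}^{=}$ with the optimum on $\map{si}$, which is harmless and well within the level of detail the paper itself provides.
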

\begin{proof}
  The proof can be found in the same manner as Lemma \ref{lemma:incomplete_path} by making one-to-one comparisons of elements, and therefore is omitted for brevity.

\end{proof}

While we could continue describing sets in this manner, a new set $\trajset{}_\map{}^{<<<} (\state{s}, \state{g})$ has the same characteristics as the set $\trajset{}_\map{}^{<<} (\state{s}, \state{g})$.

\subsection{Equivalence to Sparse Graph Algorithm} \label{sec:sparse_equivalence}

In Lemmas \ref{lemma:more_obstacles}-\ref{lemma:set_of_sets} we described a type of trajectory set, $\trajset{}_\map{}^{<<} (\state{s}, \state{g})$, whose minimal element always provides a lower bound to the true optimal solution.
We will show that the graph built by Algorithm~\ref{algorithm:sparse_graph}, $G_\mathrm{sparse}$, has the properties of $\trajset{}_\map{}^{<<}$, and therefore lower bounds the optimal solution.
To do so we will equate the sub-problems \inProblem{a}{b} to trajectory sets $\trajset{} (\state{a}, \state{b})$. We will call a sub-problem {\em lower-bounding} if it has the form of $\trajset{}_\map{}^{<<} (\state{a}, \state{b})$.
This means that there are edges in $G_{sparse}$ corresponding to $\trajset{}_{\emptyset}(\state{a}, \state{b})$, and for every sub-problem \inProblem{}{} in the children of \inProblem{a}{b}, \inProblem{}{}.map $\subseteq$ \inProblem{a}{b}.map, and \inProblem{}{} is also {\em lower-bounding}.

\begin{theorem}
  \label{theorem:sparse_lower_bound}
  At every iteration of Algorithm \ref{algorithm:sparse_graph}, \inProblem{s}{g} is lower-bounding, therefore $G_{sparse}$ is equivalent to $\trajset{}_\map{}^{<<} (\state{s}, \state{g})$.
\end{theorem}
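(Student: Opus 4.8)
The plan is to establish the structural claim by induction over the operations that Algorithm~\ref{algorithm:sparse_graph} performs: the single \texttt{addProblem} call in the initialization, and then, for each pass through the \texttt{while} loop, the (finite) cascade of \texttt{addObstacle} and \texttt{addProblem} calls it triggers before control returns to line~\ref{sparse_graph:loop_start}. The invariant carried through the induction is that \emph{every} sub-problem currently in existence is lower-bounding, namely that (a)~$G_\mathrm{sparse}$ contains the edges realizing $\trajset{}_{\emptyset}(\state{a},\state{b})$, (b)~for every obstacle $\obs{j}$ recorded in \inProblem{a}{b}\texttt{.map} and every discretized boundary state $\state{k}\in\DeltaBound{j}$ the pair \inProblem{a}{k}, \inProblem{k}{b} is among its children, and (c)~each such child has a map contained in \inProblem{a}{b}\texttt{.map} and is itself lower-bounding. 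This is exactly the recursive unfolding of the $\trajset{}^{<<}$ template, and by the remark after Lemma~\ref{lemma:set_of_sets} the even more deeply nested sets $\trajset{}^{<<<}$, and so on, share the lower-bounding characteristics of $\trajset{}^{<<}$; so proving the invariant for \inProblem{s}{g} proves the theorem.

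For the base case, initialization calls \texttt{addProblem}(\instate{s},\instate{g}), producing \inProblem{s}{g} with empty map, no children, and the edges $\trajset{}_{\emptyset}(\state{s},\state{g})$ inserted into $G_\mathrm{sparse}$. An empty map offers no obstacle boundaries at which to form joints, so $\trajset{}_{\map{}}^{<<}(\state{s},\state{g})$ collapses to $\trajset{}_{\emptyset}(\state{s},\state{g})$ and conditions (b)--(c) are vacuous; hence \inProblem{s}{g} is lower-bounding. The same argument shows that any later \texttt{addProblem}(\instate{a},\instate{b}) call either does nothing or creates a fresh leaf sub-problem of this same form, leaving all pre-existing sub-problems untouched, so \texttt{addProblem} preserves the invariant.

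For the inductive step, the only operation that alters an existing sub-problem is \texttt{addObstacle}(\inProblem{a}{b}, $\obs{j}$); if $\obs{j}$ already appears in \inProblem{a}{b}\texttt{.map} it is a no-op, so assume otherwise. Condition (a) survives because $\trajset{}_{\emptyset}$ edges are never deleted. For every $\state{k}\in\DeltaBound{j}$ the procedure calls \texttt{addProblem} to obtain \inProblem{a}{k} and \inProblem{k}{b} --- lower-bounding either freshly (base case) or by the induction hypothesis --- and registers each as a child of \inProblem{a}{b}, updating \texttt{.parents} and \texttt{.children} symmetrically; this installs, for the obstacle $\obs{j}$ just added, exactly the family of pairs $\trajset{}^{<}(\state{a},\state{k})+\trajset{}^{<}(\state{k},\state{b})$ demanded by (b). The containment part (c), which can be violated both for these children (which may have pre-existed with nonempty maps) and for the ancestors of \inProblem{a}{b} (whose maps must now also absorb $\obs{j}$), is precisely what the nested loop restores: it pushes $\obs{j}$ together with the maps of \inProblem{a}{k} and \inProblem{k}{b} into \inProblem{a}{b} and, recursively, into every sub-problem reachable from \inProblem{a}{b} along \texttt{.parents}. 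Once this cascade settles, every modified sub-problem again satisfies (a)--(c) and every other is unaffected, so the invariant holds.

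The step I expect to be the main obstacle is arguing that the cascade just described (i)~terminates and (ii)~re-establishes the invariant \emph{globally} rather than only at \inProblem{a}{b}. For (i): the endpoints ever used lie in the finite set $\{\state{s},\state{g}\}\cup\bigcup_j\DeltaBound{j}$ (with the union over the finitely many obstacles of $\obs{}$), so only finitely many sub-problems can be created and each map is a subset of a fixed finite obstacle set; every \texttt{addObstacle} call that does real work strictly increases the total number of obstacle entries recorded across all maps, a bounded quantity, so only finitely many calls do work and the recursion halts. For (ii): because \texttt{addObstacle} always updates \texttt{.parents} and \texttt{.children} in tandem, a violation of the ``child's map $\subseteq$ parent's map'' condition can only sit on an existing parent link, and the cascade follows every such link out of every sub-problem it visits; hence when it terminates no violation remains anywhere in the DAG. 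With the invariant in force at the top of every \texttt{while} iteration, \inProblem{s}{g} is lower-bounding, so the set of $\state{s}$-to-$\state{g}$ traversals of $G_\mathrm{sparse}$ has the form $\trajset{}_{\map{}}^{<<}(\state{s},\state{g})$ --- which, through Lemmas~\ref{lemma:more_obstacles}--\ref{lemma:set_of_sets}, is the fact Theorems~\ref{theorem:sparse_completeness} and~\ref{theorem:sparse_optimality} rest on.
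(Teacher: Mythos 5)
Your proof takes essentially the same route as the paper's: induction over the iterations of Algorithm~\ref{algorithm:sparse_graph}, with the invariant that every existing sub-problem is lower-bounding, established at initialization by \texttt{addProblem} and preserved through the recursive \texttt{addObstacle} cascade that pushes each new obstacle (and the children's maps) up the parent chain to \inProblem{s}{g}. You are in fact somewhat more careful than the paper on two points it leaves implicit --- termination of that recursive cascade and the global, not merely local, restoration of the child-map-containment condition --- so no changes are needed.
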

\begin{proof}
  We shall prove this by induction.

  At iteration 0, \inProblem{s}{g}.map = $\emptyset$, therefore $\trajset{}_\map{}^{<<} (\state{s}, \state{g}) = \trajset{}_{\emptyset} (\state{s}, \state{g})$, which is added to the graph at Line 3.

  At iteration $k$, assume that \inProblem{s}{g} is lower-bounding. Because of the definition of lower-bounding, this means that every sub-problem is also lower-bounding.

  At iteration $k+1$, let the blocked problem be labeled \inProblem{a}{b}, which is blocked by $\obs{j}$. If $\obs{j} \in$ \inProblem{a}{b}.map there is no change to the structure of the graph, therefore \inProblem{s}{b} remains lower-bounding.
  Otherwise, $\obs{j}$ will be incorporated into \inProblem{a}{b}.
  There will be some lower-bounding path to every state on the boundary of $\obs{j}$ from the function $\mathsf{addProblem}$, each of which will have \inProblem{a}{b} as a parent.
  From Lines 27-29 the sub-maps of child problems will always be subsets of the maps of parent problems.
  Since $\mathsf{addObstacle}$ is recursive, lower-bounding will be carried through the chain of parents to \inProblem{s}{g}, ensuring that each remains lower-bounding.

\end{proof}

Given that $G_{sparse}$ is equivalent to $\trajset{}_\map{}^{<<} (\state{s}, \state{g})$ we will prove the optimality and completeness of the algorithm.

\begin{proof}[Optimality of Algorithm \ref{algorithm:sparse_graph}]
  From Theorem \ref{theorem:sparse_lower_bound} we have that $G_{sparse} \sim \trajset{}_\map{}^{<<} (\state{s}, \state{g})$, and from Lemmas \ref{lemma:more_obstacles}-\ref{lemma:set_of_sets} we know that the minimum element of $\trajset{}_\map{}^{<<} (\state{s}, \state{g})$ is less than or equal to the optimal solution $\traj{}^* = \argmin_{\traj{} \in \trajset{}(\state{s}, \state{g})} \Cost{\traj{}}$, therefore if Algorithm \ref{algorithm:sparse_graph} returns a solution, it must have a cost less than or equal to the optimal solution.

  By definition there is no valid trajectory with cost less than the optimal solution, therefore if Algorithm \ref{algorithm:sparse_graph} returns a solution, it must be an optimal solution.
\end{proof}

\begin{proof}[Resolution Completeness of Algorithm \ref{algorithm:sparse_graph}]
  At every iteration, either the algorithm returns a solution, or one edge of the graph will be found to pass through an obstacle and be set to infinity.
  There are finitely many possible edges in the graph (the total number of edges in $G_{complete}$), therefore Algorithm \ref{algorithm:sparse_graph} is guaranteed to return a (possibly infinite) solution after $\text{Size}(G_{complete})$ iterations.
  From Theorem~\ref{theorem:sparse_optimality}, Algorithm~\ref{algorithm:sparse_graph} will only return an invalid (infinite cost) solution if no valid solution exists.
  Therefore, Algorithm~\ref{algorithm:sparse_graph} will always return a valid solution if one exists.

\end{proof}



\section{Computational Experiments} \label{sec:experiments}

To test the validity and effectiveness of our algorithm we compared it against the results of a standard grid based graph construction method, searched using lazy edge checking and D$^*$ Lite \cite{Koenig2002}.
We compare the two algorithms for three simple dynamical systems, namely, holonomic robots in 2D and 3D, and a Dubins car.

\subsection{Simulation Setup}
\begin{figure}[]
  \vspace{0.1cm}
  \centering
  \includegraphics[width=0.9\linewidth,trim={8cm 2cm 22cm 12cm},clip]{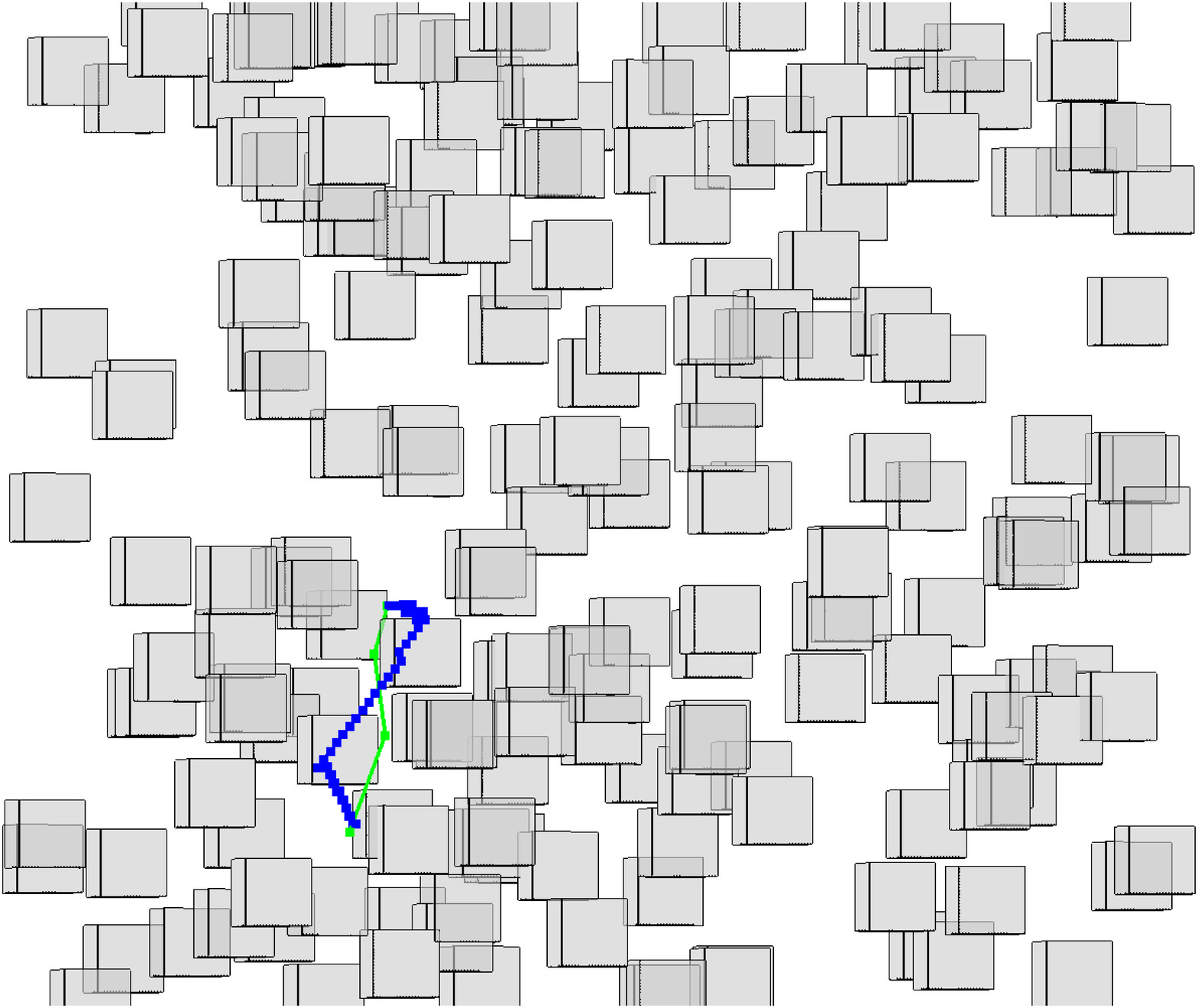}
  \caption{View of a 3D simulation with 200 randomly spaced cubes (gray) of length 2. The trajectory generated from a sparse plan graph is shown in green, and from a grid plan graph in blue. Both graphs use a spatial discretization of 0.25, and the grid plan graph uses a connectivity of 1 (26 connected).}
  \label{fig:geo3d_viewer}
\end{figure}
\begin{figure}
  \vspace{0.1cm}
    \centering
    \includegraphics[width=0.9\linewidth,trim={8cm -1cm 8cm 2cm},clip]{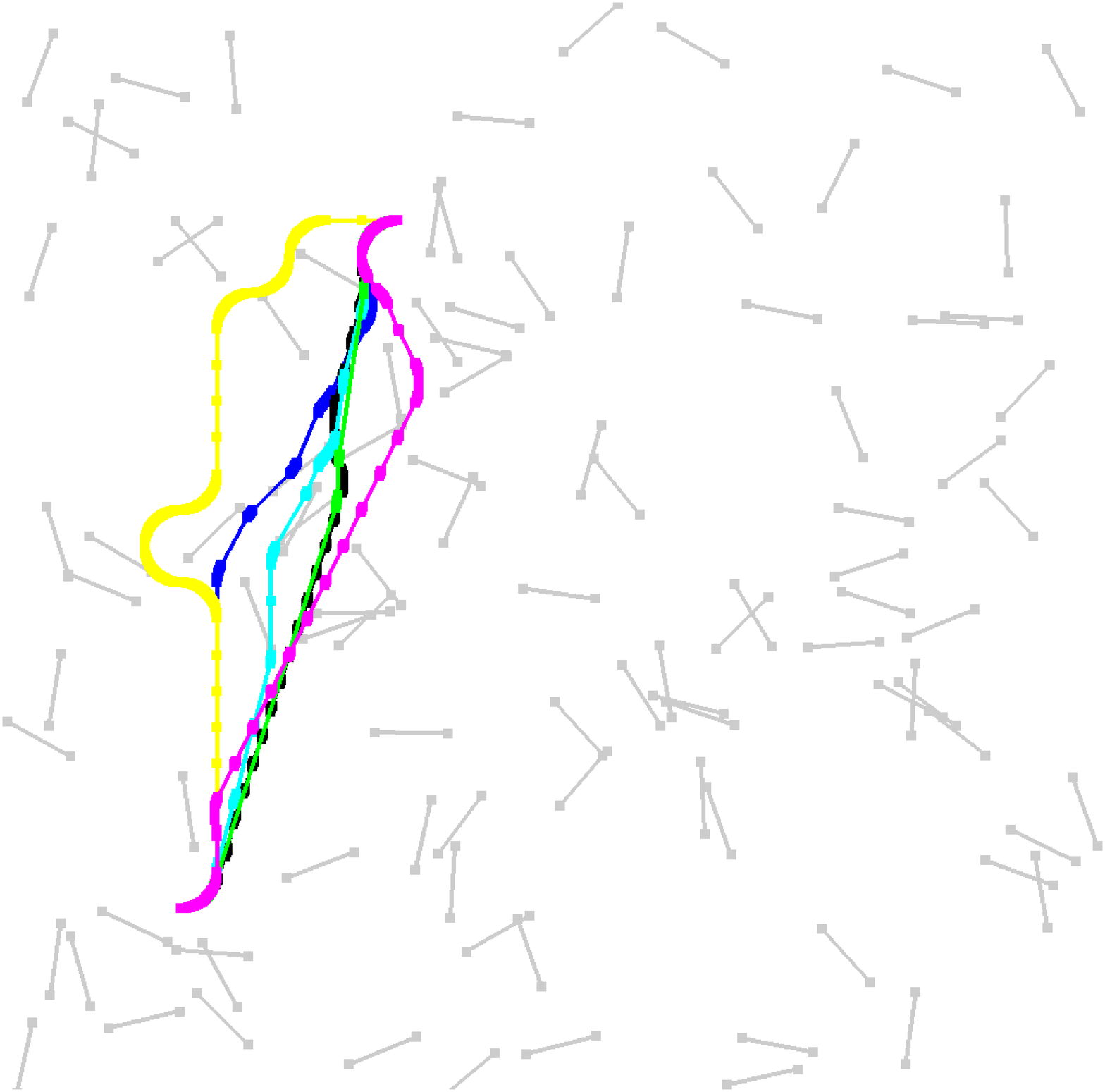}
  \resizebox{\columnwidth}{!}{
    \begin{tabular}{ | c | c r r r | r r |}
    \hline

  Color & Planner & Grid & Angular & Conn. & Cost & Time \\
        &         & Discr. & Discr.    &  & & (ms) \\
  \hline
  Green & Sparse & -- & $\pi / 8$ & -- & \textbf{20.71} & \textbf{16.9} \\
  Black & Grid & 0.25 & $\pi / 8$ & 4 & 20.86 & 751.3 \\
  Cyan & Grid & 0.5 & $\pi / 8$ & 4 & 20.88 & 486.7 \\
  Blue & Grid & 1.0 & $\pi / 4$ & 2 & 21.43 & 49.1 \\
  Purple & Grid & 0.5 & $\pi / 8$ & 2 & 21.86 & 110.9 \\
  Yellow & Grid & 1.0 & $\pi / 2$ & 0 & 25.56 & 25.6 \\
\hline
  \end{tabular}
}
  \caption{\label{fig:dubins_viewer} Generated trajectories of a Dubins car with turning radius 1, traveling through 100 obstacles of length 2, using 6 different plan graphs. The different graph types, their display color, and the computed trajectory cost and computation time are shown in the table.
  }

\end{figure}

Computational experiments were performed in simulations involving randomly generated obstacle fields in $\Real{2}$ (Holonomic 2D and Dubins Car) and $\Real{3}$.
For the 2D holonomic robot, obstacles are impassible line segments of a fixed length and random orientation distributed randomly in $[0, 30]\text{x}[0, 30] \in \Real{2}$.
For the 3D holonomic robot obstacles are impassible cubes of fixed side length randomly distributed in $[0, 30]\text{x}[0, 30]\text{x}[0, 30] \in \Real{3}$.
The starting location is $\state{s} = (5, 5, 5)$ and the goal location was set 20 units away at a random direction in the positive quadrant, with the third dimension ignored for the 2D cases.
The goal location is rounded to the nearest integer to allow for easy integration with grid based methods.
For Dubin's cars the initial and final orientations were randomly generated increments of $\pi/2$, again to allow for easy integration with grid based methods.
Examples of the setup can be seen in Figures \ref{fig:geo3d_viewer} and \ref{fig:dubins_viewer}.
\begin{figure*}
  \vspace{0.2cm}
  \centering
  \begin{subfigure}{0.32\textwidth}
  \includegraphics[width=1.0\linewidth, trim={1.5cm 2.0cm 2.3cm 0.5cm},clip]{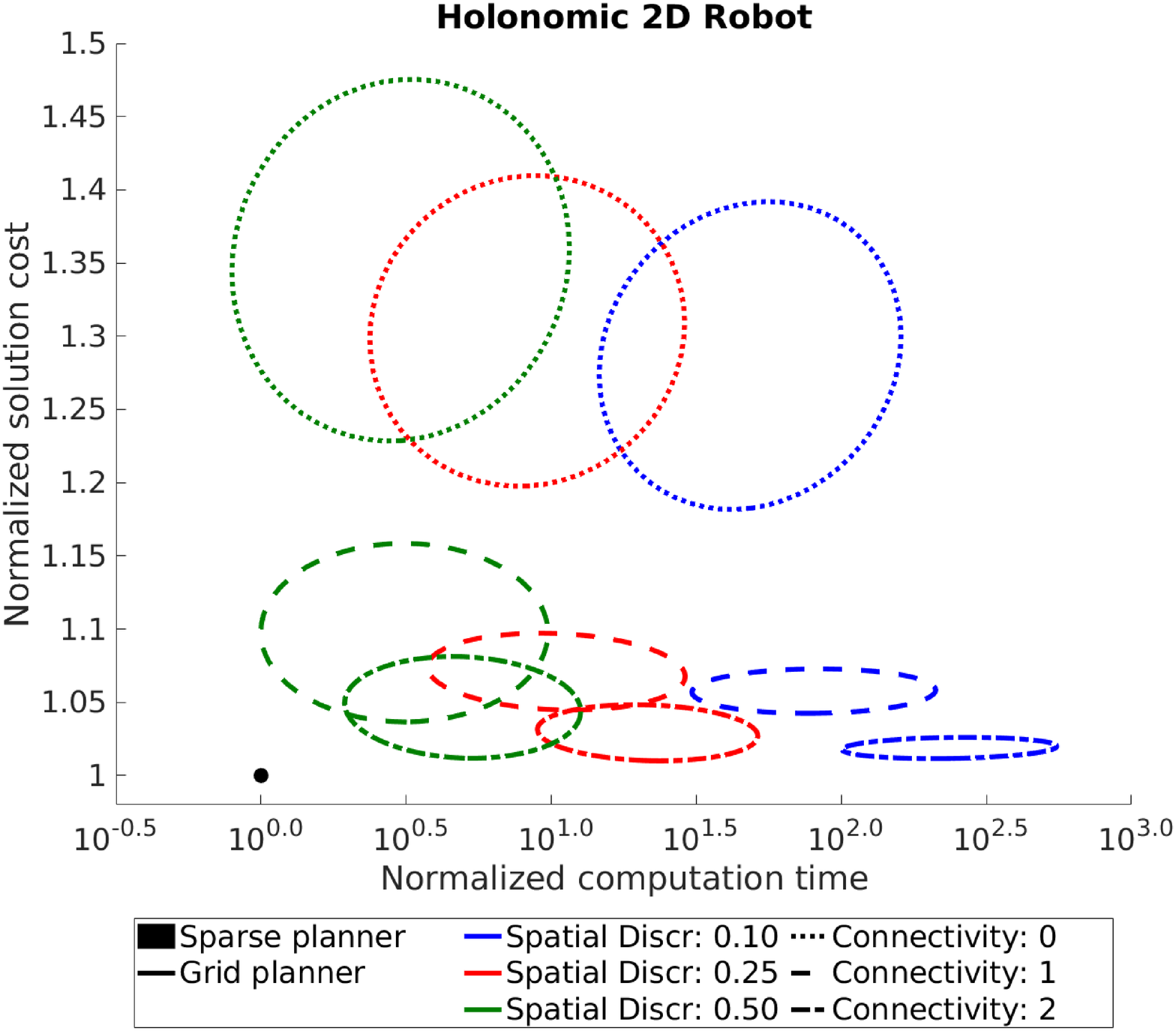}
  \caption{Trajectory cost vs. computation time for a holonomic 2D robot moving through 100 obstacles of length 2. Cost and trajectory length for each map is normalized by the values of planning with the sparse plan graph.}
    \label{fig:geo2d}
  \end{subfigure}
  \hfill
    \begin{subfigure}{0.32\textwidth}

  \includegraphics[width=1.0\linewidth, trim={1.5cm 2.0cm 2.3cm 0.5cm},clip]{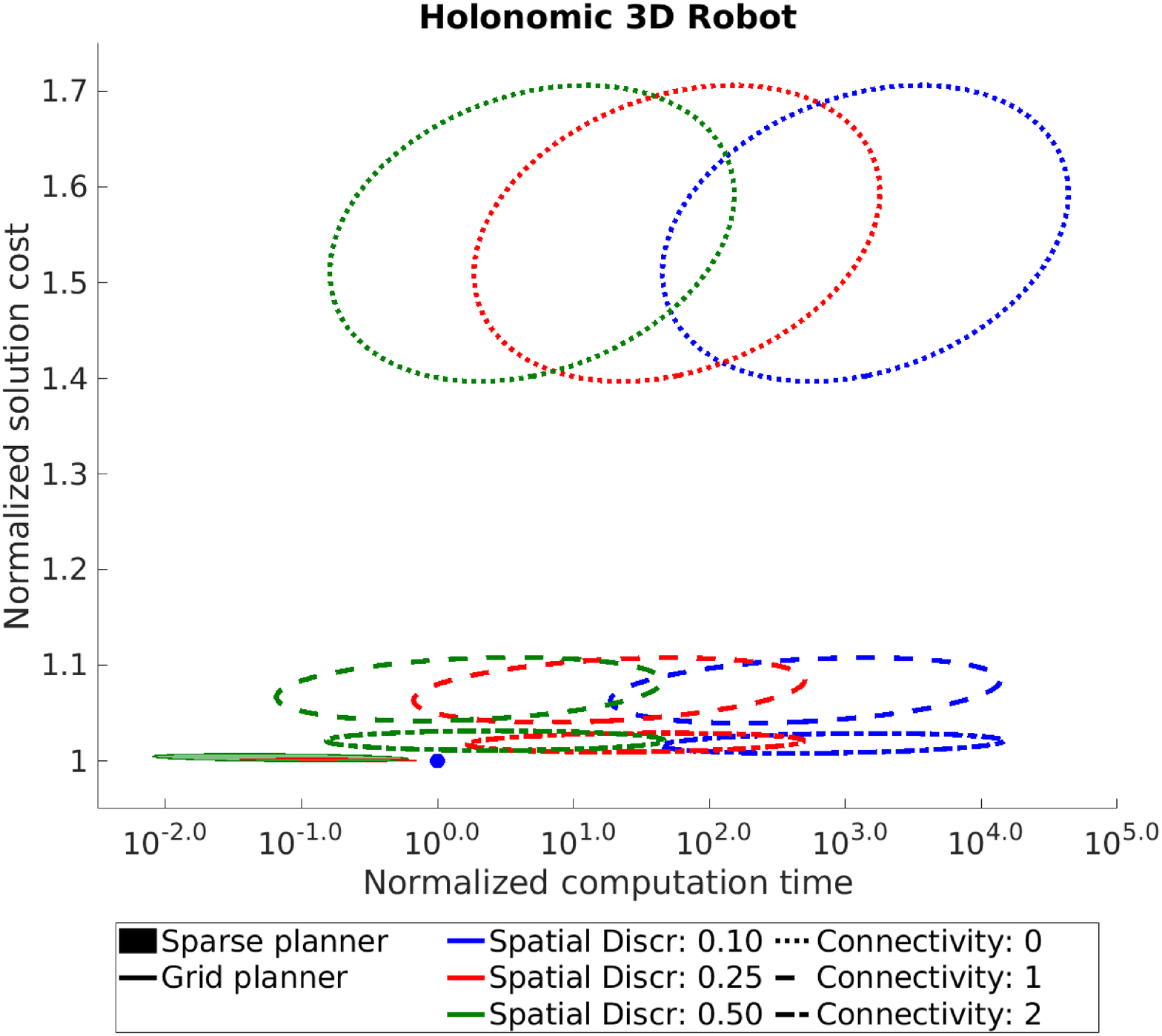}
  \caption{Trajectory cost vs. computation time for a holonomic 3D robot moving through 200 obstacles of side length 2. Cost and trajectory length for each map is normalized by the values of planning with the sparse plan graph with spatial discretization of 0.1.}
  \label{fig:geo3d}
  \vspace{-0.35cm}
\end{subfigure}
\hfill
\begin{subfigure}{0.32\textwidth}
  \includegraphics[width=1.0\linewidth, trim={1.5cm 2.0cm 2.3cm 0.3cm},clip]{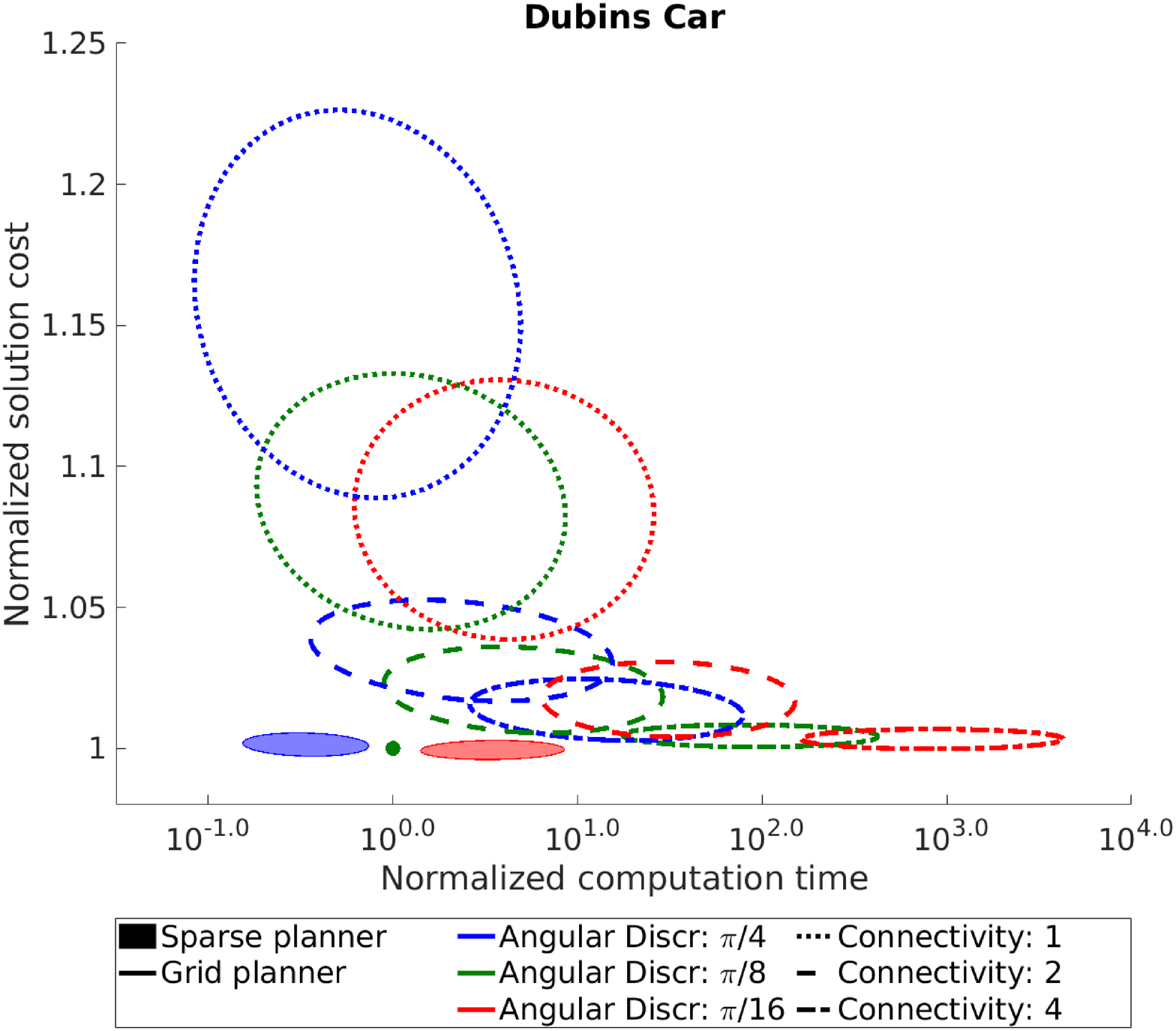}
  \caption{Trajectory cost vs. computation time for a Dubins car moving through 100 obstacles of length 2. Cost and trajectory length for each map is normalized by the values of planning with the sparse plan graph with angular discretization of $\pi/8$.}
  \label{fig:dubins}
    \vspace{-0.33cm}
  \end{subfigure}
  \vspace{0.3cm}
\caption{\label{fig:all_results}Normalized results for trajectory cost vs. computation time for three robots on 200 randomly generated maps. Each type of plan graph was run on the same 200 maps for each robot, and the results were normalized to show the relative speed and solution quality of the different graphs. The figures display the 95\% boundary of the values for each graph type, with shape fill showing the type of plan graph (sparse or grid), color showing the discretization level, and line style showing the connectivity of the grid graphs.}
  \end{figure*}

\subsection{Grid Construction}

The gridded graph construction places nodes at regular intervals through the state space with a fixed spatial discretization and, for the Dubins car, angular discretization.
The edges between nodes are placed based on a connectivity parameter, which determines to what level of adjacent nodes a node is connected to.
A connectivity level of ``0'' connects to adjacent non-diagonal nodes (4 connected in 2D) and a connectivity of $n$ for $n \geq 1$ connects to all nodes within $n$ of that node in any dimension.
For holonomic robots any path that was a scalar duplicate of another one was trimmed to limit complexity.
The Dubins car is connected to all discretizations of angular space within the spatial connectivity, but with a heuristic pruning of high cost (near full turn) trajectory primitives. This was found to significantly speed up computation without hurting the quality of the paths.
  \begin{table*}[h]
  \vspace{-0.0cm}
  \centering
  \begin{tabular}{| c  r  r  r | r r r r r |}
    \hline
    Planner & Grid Discretization & Angular Discretization & Connectivity & Path Cost & Plan Time (ms) & Nodes & Edges & Area Sensed \\
    \hline
    Sparse & -- & $\pi / 16$ & -- & 22.315 & 1645 & 282 & 4838 & 436 \\
    Sparse & -- & $\pi / 8$ & -- & 22.328 & 140 & 140 &              1369 &      418 \\
    Sparse & -- & $\pi / 4$ & -- & 22.357 & 20 & 70 &              383 &       382 \\
    Grid & 0.25 & $\pi / 16$ & 4 & 22.400 & 35871 & 49318 &          1101100 &     528 \\
    Grid & 0.25 & $\pi / 8$ & 4 & 22.424 & 2541 & 25276   &       278740   &   513 \\
    Grid & 0.25 & $\pi / 4$ & 4 & 22.635 & 258 & 13704 &               77655 &      522 \\
    Grid & 0.50 & $\pi / 16$ & 2 & 22.710 & 715 & 13069 &         113290  &    544 \\
    Grid & 0.50 & $\pi / 8$ & 2 & 22.783 & 78 & 6677 &               27560 &      546 \\
    Grid & 0.50 & $\pi / 4$ & 2 & 23.090 & 29 & 3954 &               11733 &      449 \\
    Grid & 1.00 & $\pi / 16$ & 1 & 24.214 & 55 & 4266 &               19424 &      417 \\
    Grid & 1.00 & $\pi / 8$ & 1 & 25.280 & 15 & 2287 &                5897 &      403 \\
    Grid & 1.00 & $\pi / 4$ & 1 & 25.837 & 7 & 887 &              1780 &      282 \\

    \hline
  \end{tabular}
  \caption{Summary of mean values for planning a trajectory for a Dubins Car with turning radius of 1 amoung 100 obstacles of length 2 (see Figure \ref{fig:dubins_viewer}). Standard deviations are omitted as individual measurements are map dependent and therefore the values do not follow a normal distribution, see Figure~\ref{fig:dubins} for relative distributions.}
    \label{table:dubins_data}
\end{table*}
\subsection{Search}

Both the graph created by Algorithm~\ref{algorithm:sparse_graph} and the grid graph use the same implementation of D$^*$ Lite \cite{Koenig2002} as the search algorithm.
As we are only performing a single planning step we only use the lifelong planning element of the algorithm, with edge updates coming from the ``lazy'' collision checking.
Edges for collision checking along the current ``best path'' are selected starting from the robot's current pose and continuing forward towards the goal.
This matches Lazy Weighted A* \cite{Cohen2014} and the forward edge selector described by Dellin and Srinivasa~\cite{Dellin2016}.
As suggested by Dellin and Srinivasa~\cite{Dellin2016} other edge selectors are viable and can have different performance characteristics, however, that is not the focus of this work.
Collision checks in 2D were performed using a simple line intersection check, while collision checks in 3D were performed using ray casting in Octomap \cite{Hornung2013}. Each collision checker acts as a simulated sensor, moving along the trajectory mapping free or occupied space.

\subsection{Results}
Comparison data for the three dynamical systems is shown in Figure~\ref{fig:all_results}, comparing the length of the path generated with the time to generate the path.
Since path length and computation time are highly map dependent, the values are normalized by the results from a single method and the same scenario is re-run with multiple planners/settings.
As can be seen in the results, while there is a clear trade off between path quality and solution time in the grid based methods, the sparse plan graph method is significantly less sensitive to parameter choice in path length, and provides paths that are both lower in cost and faster to compute.

A summary of the data for the Dubins car experiments is shown in Table \ref{table:dubins_data}. As expected, the sparse graph construction algorithm created graphs that generated shorter paths, faster, with significantly less nodes and edges. The total area sensed was measured by discretizing the position space into voxels of size 0.2 and marking them as sensed if the mapping process moved through them.


\section{Conclusion}

In this paper we describe a new plan graph construction algorithm which explicitly connects the mapping and planning elements of the navigation problem.
We describe this algorithm in detail, and prove that it generates graphs that are both optimal and complete against the full sensor data, despite only having directly mapped a subset of it.

The graph uses discretization only along the boundaries of the state space, while remaining continuous on the inner open set, providing lower cost paths than a discritization of the full state space. In addition, by using mapping to drive the construction of the graph, nodes and edges are added ``as needed'' providing a sparse representation of the underlying motion planning problem for fast trajectory computation.


\balance
\bibliographystyle{IEEEtran}
\bibliography{refs}

\end{document}